\documentclass[letterpaper, 10 pt, conference]{ieeeconf}
\IEEEoverridecommandlockouts
\usepackage{graphicx}
\usepackage{amsmath}
\usepackage{amssymb}

\usepackage{amsthm}
\usepackage{hyperref}
\usepackage{subcaption}
\usepackage{booktabs}
\providecommand{\citet}[1]{\cite{#1}}

\newcommand{\decode}{\mathrm{Dec}}
\newcommand{\encode}{\mathrm{Enc}}
\newcommand{\spaars}{\texttt{SPAARS}}

\newtheorem{proposition}{Proposition}

\newtheorem{remark}{Remark}
\newtheorem{corollary}{Corollary}
\newtheorem{assumption}{Assumption}
\newtheorem{definition}{Definition}

\title{\spaars: \underline{S}afer RL \underline{P}olicy \underline{A}lignment through \underline{A}bstract Exploration and \underline{R}efined Exploitation of Action \underline{S}pace}

\author{Swaminathan S K$^{1}$ and Aritra Hazra$^{2}$
\thanks{$^{1}$Swaminathan S K is with the Department of Computer Science and Engineering, IIT Kharagpur
        {\tt\small swami2004@kgpian.iitkgp.ac.in}}%
\thanks{$^{2}$Aritra Hazra is with the Department of Computer Science and Engineering, IIT Kharagpur
{\tt\small aritrah@cse.iitkgp.ac.in}}%
}

\begin{document}
\maketitle
\thispagestyle{empty}
\pagestyle{empty}

\begin{abstract}
Offline-to-online reinforcement learning (RL) offers a promising paradigm for robotics by pre-training policies on safe, offline demonstrations and fine-tuning them via online interaction. However, a fundamental challenge remains: {\em how to safely explore online without deviating from the behavioral support of the offline data?} While recent methods leverage conditional variational autoencoders (CVAEs) to bound exploration within a latent space, they inherently suffer from an \emph{exploitation gap} -- a performance ceiling imposed by the decoder's reconstruction loss. We introduce \spaars, a curriculum learning framework that initially constrains exploration to the low-dimensional latent manifold for sample-efficient, safe behavioral improvement, then seamlessly transfers control to the raw action space, bypassing the decoder bottleneck. \spaars\ has two instantiations: the CVAE-based variant requires only unordered $(s,a)$ pairs and no trajectory segmentation; \spaars-SUPE pairs \spaars\ with OPAL temporal skill pretraining for stronger exploration structure at the cost of requiring trajectory chunks. We prove an upper bound on the exploitation gap using the Performance Difference Lemma, establish that latent-space policy gradients achieve provable variance reduction over raw-space exploration, and show that concurrent behavioral cloning during the latent phase directly controls curriculum transition stability. Empirically, \spaars-SUPE achieves 0.825 normalized return on kitchen-mixed-v0 versus 0.75 for SUPE, with 5$\times$ better sample efficiency; standalone \spaars\ achieves \textbf{92.7} and \textbf{102.9} normalized return on hopper-medium-v2 and walker2d-medium-v2 respectively, surpassing IQL baselines of 66.3 and 78.3 respectively, confirming the utility of the unordered-pair CVAE instantiation.
\end{abstract}

\section{Introduction}

Reinforcement learning (RL) \cite{sutton_barto_book} has emerged as a powerful framework for solving complex sequential decision-making problems, demonstrating extraordinary success in practical domains ranging from playing strategic games to controlling stratospheric balloons and robotic manipulation \cite{mnih2015human, bellemare2020autonomous}. By actively interacting with an environment, RL agents discover optimal policies that maximize cumulative rewards. However, the applicability of pure online RL in real-world robotics is constrained by its sample inefficiency and the inherent physical risks of uncontrolled exploration.

To mitigate these challenges, Imitation Learning (IL) paradigms---including Apprenticeship Learning \cite{abbeel2004apprenticeship}, Inverse RL \cite{ziebart2008maximum}, and Behavioral Cloning (BC) \cite{pomerleau1988alvinn}---have been extensively studied \cite{hussein2017imitation}. These approaches extract policy behaviors directly from prior demonstrations. Recently, BC and offline RL have become state-of-the-art techniques for initializing robotic agents, enabling complex locomotion and manipulation policies without unsafe online interaction \cite{nair2020awac, kostrikov2021offline}.

While BC and purely offline methods provide robust, safe initializations, they are fundamentally limited by the quality and coverage of the provided dataset. If the offline dataset contains suboptimal demonstrations or lacks coverage in critical task areas, the resulting policy will asymptotically plateau at the performance level of the dataset. Consequently, an online fine-tuning phase is often necessary to push the policy toward true optimality. 

However, transitioning from offline imitation to online fine-tuning is notoriously unstable \cite{lee2022offline}. High-variance gradient updates outside the support of the offline data often cause catastrophic forgetting \cite{lee2022offline}. Recent offline-to-online algorithms address this by constraining online exploration to a learned latent skill space via Conditional Variational Autoencoders (CVAEs) \cite{zhou2020plas, shin2023supe}. While this grounds exploration within safe bounds, we identify a fundamental theoretical limitation in the form of an {\em exploitation gap}. Due to the reconstruction loss inherent to any dimensionality-reducing autoencoder, policies restricted to latent space cannot recover hyper-precise, optimally-tuned actions that exist in the raw action space.

To bridge this gap between safe latent exploration and optimally refined exploitation, we propose \spaars, a general framework with two concrete instantiations. The \textbf{standalone (CVAE-based)} instantiation trains a CVAE over unordered $(s,a)$ pairs, constrains online RL to the resulting latent space, and transitions to raw actions via a schedule or advantage gate---requiring no reward labels or trajectory segmentation. The \textbf{\spaars-SUPE} instantiation replaces the CVAE with OPAL~\cite{ajay2021opal} temporal $H$-step skills, inheriting the pretrained OPAL IQL policy for warm-started online fine-tuning; this delivers stronger exploration structure but requires trajectory chunks for skill pretraining. In both cases, rather than globally retiring the latent policy via a time-based schedule, \spaars\ introduces a \emph{state-dependent advantage gate} inspired by the Option-Critic architecture~\cite{bacon2017option}. The shared critic evaluates both policies at each decision point, activating the raw policy only where it demonstrably outperforms the decoder. A key design principle is that demonstrator alignment is not merely a constraint to overcome but a \emph{feature} -- the behavioral manifold which encodes (a) the actions that are physically coherent, (b) the joint configurations that are typical, and (c) the force magnitudes that are safe. Exploring within this manifold produces behaviors that are safe by construction and interpretable by design.

In summary, our primary contributions are:
\begin{itemize}
    \item \textbf{Theory.} We formally characterize the \emph{exploitation gap}, proving it is bounded by $O\!\big(\frac{L_Q \varepsilon_{\mathrm{rec}}}{1-\gamma}\big)$, that latent-space gradients achieve an $O\!\big(\frac{k}{d}\big)$ variance reduction, and that concurrent behavioral cloning during the latent phase directly controls curriculum transition stability (Propositions~1--4).
    \item \textbf{Algorithm.} We introduce \spaars, which resolves the exploitation gap via an \emph{advantage-gated mode selection} mechanism grounded in the Option-Critic termination gradient theorem~\cite{bacon2017option}. The shared critic selects per-state between latent and raw control---eliminating the catastrophic forgetting inherent in global schedules, without any new learned parameters. The CVAE-based instantiation requires only unordered $(s,a)$ pairs, unlike SUPE and OPAL which require trajectory segmentation.
    \item \textbf{Experiments.} We demonstrate that \spaars-SUPE outperforms SUPE on kitchen-mixed-v0 (0.825 vs.\ 0.75 normalized return), achieving 5$\times$ better sample efficiency by warm-starting from the pretrained OPAL policy, and that standalone \spaars\ achieves \textbf{92.7} and \textbf{102.9} normalized return on hopper-medium-v2 and walker2d-medium-v2 respectively---surpassing offline IQL baselines of 66.3 and 78.3 respectively---validating the unordered-pair CVAE instantiation across D4RL locomotion tasks.
\end{itemize}

The remainder of this paper is organized as follows: Section~\ref{sec:related_work} reviews the relevant work and the necessary background required for this work. Section~\ref{sec:method} presents the \spaars\ framework detailing the interplay between abstract latent space exploration and refined action space exploitation. Section~\ref{sec:experiments} illustrates our experimental evaluation. Section~\ref{sec:conclusion} concludes the work presented in the paper.

\section{Related Work}
\label{sec:related_work}

This work resides at the intersection of offline-to-online reinforcement learning and latent skill discovery. We leverage the stability of prior-constrained offline learning while proposing a dynamic curriculum to bypass representation bottlenecks during online fine-tuning.

\subsection{Offline-to-Online Reinforcement Learning}
Offline RL algorithms seek to learn effective policies from static datasets without environment interaction. Value-based methods typically apply pessimism to out-of-distribution (OOD) actions, using techniques such as conservative Q-learning \cite{kumar2020conservative} or implicit Q-learning (IQL) \cite{kostrikov2021offline}, which avoids querying OOD actions entirely by employing expectile regression. While highly effective on static datasets, directly fine-tuning these offline-trained policies online frequently leads to a severe drop in performance, colloquially termed ``catastrophic forgetting'' or the ``offline-to-online dip'' \cite{nair2020awac}. 

Methods like AWAC \cite{nair2020awac} and RLPD \cite{ball2023rlpd_main} attempt to smooth this transition. AWAC formulates an actor-critic algorithm that incorporates an implicit behavioral cloning penalty, while RLPD aggressively re-utilizes the offline replay buffer with high update-to-data ratios during online exploration. However, when exploring directly in the high-dimensional raw action space, these methods often struggle with high variance and unguided exploration, specially in sparse-reward settings. \spaars\ circumvents this initial high variance by strictly confining early exploration to a compressed behavioral manifold.

Table \ref{tab:comparison} provides a comparative footprint on the offline-to-online RL methods. Here, the $\checkmark$ symbol indicates that an approach natively supports a particular feature, $\times$ indicates that it does not, $-$ represents an inapplicable feature, and $\sim$ signifies partial support.

\subsection{Latent Action Spaces and Skill Discovery}
To improve exploratory efficiency and reduce the effective horizon of tasks, several works extract temporal or structural abstractions from offline data. OPAL \cite{ajay2021opal} and SPiRL \cite{pertsch2021spirl} learn continuous embeddings of trajectory snippets (skills) using hierarchical variational autoencoders, while PLAS \cite{zhou2020plas} constrains single-step action selections to the generative prior of a CVAE. 

Recently, the SUPE (State-Uniform Policy Evaluation) framework \cite{shin2023supe} unified these paradigms by utilizing OPAL encoders to provide rich trajectory primitives for offline RL evaluation. Operating entirely within these learned latent spaces provides exceptional stability and directs exploration exclusively along axes previously validated by human or expert demonstrators. However, as our theoretical analysis demonstrates, optimizing strictly within a CVAE latent space introduces a fundamental exploitation ceiling. No matter how thoroughly the latent space is explored, the policy can never execute behaviors finer than the decoder's reconstruction loss. \spaars\ builds directly upon the exploratory strengths of these latent spaces but introduces the necessary ``bridge'' back to the raw action space for optimal fine-tuning.

\begin{table}[t]
\caption{Comparison of offline-to-online RL methods.}
\label{tab:comparison}
\centering
\footnotesize
\begin{tabular}{l@{\hspace{4pt}}c@{\hspace{4pt}}c@{\hspace{4pt}}c@{\hspace{4pt}}c}
\hline
\textbf{Method} & \textbf{BC} & \textbf{Safe} & \textbf{Optimal} & \textbf{$k$-RND} \\
\hline
IQL       & \checkmark & ---        & \texttimes & --- \\
PLAS      & \checkmark & \checkmark & \texttimes & --- \\
ExPLORe   & \checkmark & \texttimes & \checkmark & \texttimes \\
SUPE      & \texttimes & $\sim$     & \checkmark & \texttimes \\
\\
\textbf{\spaars} & \checkmark & \checkmark & \checkmark & \checkmark \\
\hline
\end{tabular}
\end{table}

\section{Abstract Exploration and Refined Exploitation Framework}
\label{sec:method}

In this section, we formulate the mathematical framework underpinning \spaars. We demonstrate why exploration within a CVAE latent space reduces policy gradient variance, formalize the exploitation ceiling imposed by the decoder, and present the \spaars\ curriculum that can bridge this gap.

\subsection{Preliminaries and Notations}
We consider a Markov Decision Process $(\mathcal{S}, \mathcal{A}, P, R, \gamma)$ with continuous action space $\mathcal{A} \subseteq \mathbb{R}^d$ and discount factor $\gamma \in (0,1)$. Let $\mathcal{D} = \{(s_i, a_i)\}_{i=1}^{N}$ be a static offline dataset collected by a behavioral policy $\pi_\beta$. We train a Conditional Variational Autoencoder (CVAE) over the action space, mapping raw actions to a compressed latent manifold $\mathcal{Z} \subseteq \mathbb{R}^k$ where $k < d$. The CVAE consists of an encoder $q_\phi(z \mid s, a)$, a decoder $p_\theta(a \mid s, z)$ with deterministic mode $\decode(z, s)$, and a learned prior $p_\psi(z \mid s)$.

The decoder defines the \emph{behavioral action manifold} $\mathcal{M}_a = \{\decode(z, s) : z \in \mathcal{Z}, s \in \mathcal{S}\} \subset \mathcal{A}$. Throughout, we denote the optimal unconstrained policy as $\pi_a^* = \arg\max_{\pi: \mathcal{S} \to \mathcal{A}} J(\pi)$ and the optimal latent policy as $\pi_z^* = \arg\max_{\pi_z: \mathcal{S} \to \mathcal{Z}} J(\pi_z)$.

\begin{remark}[BC Dataset Sufficiency]
\label{rmk:bc_data}
The CVAE training objective (ELBO) is an expectation over $(s, a)$ pairs---no trajectory structure, temporal ordering, or reward labels appear in the loss. Unlike SUPE and OPAL which require trajectory-segmented data, the CVAE-based \spaars\ instantiation operates on pure behavioral cloning datasets: unordered $(s, a)$ pairs suffice. The OPAL-based \spaars-SUPE instantiation (Section~\ref{subsec:spaars_supe}) requires trajectory chunks for OPAL skill pretraining, though the online fine-tuning stage is dataset-format agnostic once skills are learned.
\end{remark}

\subsection{Gradient Variance Reduction in Latent Space}
By restricting the online agent to query actions via $a = \decode(\pi_z(s), s)$, exploration is forced to remain on $\mathcal{M}_a$. This provides a fundamental variance reduction during the critical early phases of online fine-tuning.

\begin{proposition}[Variance Reduction Lemma]
\label{prop:variance}
Let $\pi_z(z \mid s) = \mathcal{N}(\mu_\theta(s), \sigma^2 I_k)$ and $\pi_a(a \mid s) = \mathcal{N}(\nu_\theta(s), \sigma^2 I_d)$ be isotropic Gaussian policies in latent and raw action spaces respectively. The variance of the REINFORCE gradient estimator satisfies:
\begin{equation}
    \frac{\mathrm{Var}[\nabla_\theta J_z]}{\mathrm{Var}[\nabla_\theta J_a]} \leq \frac{k}{d} \cdot \frac{\mathrm{Var}_z[Q(s, \decode(z,s))]}{\mathrm{Var}_a[Q(s,a)]}.
\end{equation}
\end{proposition}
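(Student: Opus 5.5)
The plan is to reduce both gradient variances to a product of a ``score-norm'' factor and a ``return-variance'' factor, and then compare the two factors separately.

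\textbf{Step 1: write the single-sample estimators.} For the Gaussian policies of the statement, the single-sample REINFORCE estimators at a state $s$ are
\[
g_z = \nabla_\theta \mu_\theta(s)^\top \frac{z-\mu_\theta(s)}{\sigma^2}\, Q(s,\decode(z,s)), \qquad
g_a = \nabla_\theta \nu_\theta(s)^\top \frac{a-\nu_\theta(s)}{\sigma^2}\, Q(s,a).
\]
Write the standardized noises as $\epsilon_z = (z-\mu)/\sigma \sim \mathcal{N}(0,I_k)$ and $\epsilon_a = (a-\nu)/\sigma \sim \mathcal{N}(0,I_d)$. Then the score factor has second moment $\mathbb{E}\|\epsilon_z\|^2/\sigma^2 = k/\sigma^2$ in latent space and $d/\sigma^2$ in raw space. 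This dimension count is exactly where the $k/d$ factor in the statement comes from.

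\textbf{Step 2: decouple the score from the return.} This is the main obstacle. I would introduce a baseline, i.e. replace $Q$ by $Q - \mathbb{E}[Q]$; this is standard and keeps the estimator unbiased. I would then use an approximate independence, or a second-moment factorization, to write
\[
\mathrm{Var}[g] \approx \frac{\mathbb{E}\|\epsilon\|^2}{\sigma^2}\,\big\|\nabla_\theta(\text{mean})\big\|^2\,\mathrm{Var}[Q].
\]
This factorization is not an identity, because $\epsilon$ and $Q$ are dependent. I expect the rigorous route to need an explicit assumption on this point. Either of the following would do:
\begin{itemize}
\item the critic is locally linear, so that a Stein's-lemma computation makes the cross terms explicit; or
\item an upper bound is obtained for the latent estimator together with a matching lower bound for the raw one, via Cauchy--Schwarz applied in the form $\mathbb{E}[\|\epsilon\|^2 \tilde Q^2]$ versus $\mathbb{E}\|\epsilon\|^2\,\mathbb{E}\tilde Q^2$.
\end{itemize}
I would state this as a modeling assumption, consistent with the paper's ``$\leq$'' formulation.

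\textbf{Step 3: normalize the Jacobians.} I would assume the mean networks $\mu_\theta$ and $\nu_\theta$ share the same parameter Jacobian scale, for instance the same trunk with only the output heads differing. Under this assumption, the factor $\|\nabla_\theta(\text{mean})\|^2$ cancels in the ratio.

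\textbf{Step 4: take the ratio and average over states.} Dividing the two expressions gives
\[
\frac{\mathrm{Var}[g_z]}{\mathrm{Var}[g_a]} \le \frac{k}{d}\cdot\frac{\mathrm{Var}_z[Q(s,\decode(z,s))]}{\mathrm{Var}_a[Q(s,a)]}
\]
pointwise in $s$. The claimed inequality then follows after averaging over the state distribution, using a uniform or ratio-of-expectations argument.

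\textbf{Remark on the second factor.} The second factor is typically at most one, since $\decode$ maps onto the lower-dimensional manifold $\mathcal{M}_a$. This explains why the latent-space variance reduction is at least $k/d$ in practice.
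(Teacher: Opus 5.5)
Your outline follows the same route as the paper's proof. The Gaussian score has second moment proportional to $n$, the gradient variance is split into a score factor times a $Q$ factor, and $n=k$ is compared with $n=d$. The paper does the split with an unexplained ``$\approx$'' and no baseline, so its decomposition still carries $\mathbb{E}[Q^2]$ and cannot produce a $\mathrm{Var}[Q]$ ratio. You are right that a baseline is needed and that Step~2 is not an identity.

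The genuine gap is that Step~2 is the entire content of the claim, and it cannot be closed in the direction you need. Carry out your locally linear route with $Q(s,a)=c^\top a$, $\decode(z,s)=Dz$, mean Jacobian $J$, and the baselined estimator $g=J^\top\epsilon\,(c^\top\epsilon)$. Isserlis/Stein gives exactly
\[
\mathrm{Var}[g]=\frac{\mathrm{Var}[Q]}{\sigma^2}\,\|J\|_F^2+\|J^\top c\|^2 ,
\]
with $c$ replaced by $D^\top c$ in latent space. The cross term $\|J^\top c\|^2$ does not scale with dimension. Take the means to be the parameters themselves ($J_z=I_k$, $J_a=I_d$, which satisfies your Step~3 normalization). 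Then the ratio is exactly $\frac{k+1}{d+1}\cdot\frac{\mathrm{Var}_z[Q]}{\mathrm{Var}_a[Q]}$. This strictly exceeds the claimed bound, since $\frac{k+1}{d+1}>\frac{k}{d}$ for $k<d$. For $k=1$, $d=2$, $D=c=e_1$ you get $2/3$ against $1/2$. Here your raw-side lower bound survives, but the latent-side upper bound fails.

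Without a baseline (the paper's estimator), each variance gains $\mathbb{E}[Q]^2\|J\|_F^2/\sigma^2$. With a common, large $\mathbb{E}[Q]$ this drives the ratio to $k/d$, while the right-hand side can be made arbitrarily small by taking $\mathrm{Var}_z[Q]\ll\mathrm{Var}_a[Q]$.

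Cauchy--Schwarz does not rescue the step either. It only bounds $\mathbb{E}[\|\epsilon\|^2\tilde Q^2]$ by $(\mathbb{E}\|\epsilon\|^4\,\mathbb{E}\tilde Q^4)^{1/2}$, which involves the fourth moment of $Q$ rather than its variance, and it gives no lower bound for the raw estimator. So adopting Step~2 as a ``modeling assumption'' amounts to assuming the proposition, and that assumption is false in the simplest instance. What your computation can honestly deliver is the exact $\frac{k+1}{d+1}$ identity in the linear-Gaussian case, or an approximate statement when $\|J^\top c\|^2\ll\|c\|^2\|J\|_F^2$.

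Two smaller points. In Step~3, $\mathbb{E}\|J^\top\epsilon\|^2=\|J\|_F^2$, not $\mathbb{E}\|\epsilon\|^2\,\|J\|^2$. The factor $n$ appears only if you assume the rows of $J_z$ and $J_a$ have the same average squared norm. A shared operator-norm bound would give an upper bound on the latent side but no matching lower bound on the raw side. In Step~4, if the variance includes the sampling of $s$, the law of total variance adds $\mathrm{Var}_s\bigl[\mathbb{E}[g\mid s]\bigr]$, which your per-state bound does not control. Also, pointwise ratio bounds do not pass to ratios of expectations unless the per-state proportionality constants coincide across states.
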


\begin{proof}
The REINFORCE gradient is $\nabla_\theta J = \mathbb{E}_{s,a}[Q(s,a) \nabla_\theta \log \pi(a \mid s)]$. For isotropic Gaussian $\pi(a \mid s) = \mathcal{N}(\mu_\theta(s), \sigma^2 I_n)$ in $\mathbb{R}^n$, the score function is $\nabla_\theta \log \pi = \sigma^{-2}(a - \mu_\theta(s))^\top \nabla_\theta \mu_\theta(s)$, whose second moment satisfies $\mathbb{E}[\|\nabla_\theta \log \pi\|^2] \propto n / \sigma^4$. The variance of the full gradient decomposes as $\mathrm{Var}[\nabla_\theta J] \approx \mathbb{E}[Q^2] \cdot \mathrm{Var}[\nabla_\theta \log \pi] + \mathbb{E}[\|\nabla_\theta \log \pi\|^2] \cdot \mathrm{Var}[Q]$. Comparing $n = k$ (latent) with $n = d$ (raw) yields the $k/d$ dimensional factor. The Q-variance ratio captures the remaining discrepancy.
\end{proof}

Under the empirical observation that $\mathrm{Var}_z[Q(s, \decode(z,s))] \leq \mathrm{Var}_a[Q(s,a)]$---which holds because the decoder restricts outputs to $\mathcal{M}_a$, excluding physically incoherent OOD actions with extreme Q-values---the full bound becomes $\mathrm{Var}[\nabla_\theta J_z] \leq (k/d) \cdot \mathrm{Var}[\nabla_\theta J_a]$.

\begin{remark}[Gradient Projection]
\label{rmk:jacobian}
The decoder Jacobian $J_{\decode} \in \mathbb{R}^{d \times k}$ has rank at most $k < d$. The latent Q-gradient $\nabla_z \tilde{Q}(s,z) = J_{\decode}^\top \nabla_a Q(s,a)\big|_{a = \decode(z,s)}$ is therefore a projection onto a $k$-dimensional subspace aligned with the tangent space of $\mathcal{M}_a$. The $(d{-}k)$ gradient directions orthogonal to this subspace---corresponding to off-manifold action perturbations---are automatically filtered out.
\end{remark}

\subsection{The Exploitation Gap}
While latent exploration provides superior gradient-signal properties, its lower dimensionality imposes a strict structural constraint on the agent's maximum attainable performance.

\begin{proposition}[Exploitation Gap Bound]
\label{prop:expgap}
Let $Q^*$ be $L_Q$-Lipschitz in actions. Let $\varepsilon_{\mathrm{rec}} = \mathbb{E}_{(s,a)\sim\mathcal{D}}[\|a - \decode(\encode(s,a),s)\|^2]^{1/2}$ be the CVAE reconstruction error. Under Assumption~\ref{ass:coverage}, the exploitation gap satisfies:
\begin{equation}
    \Delta_{\mathrm{exploit}} := J(\pi_a^*) - J(\pi_z^*) \leq \frac{L_Q \, \varepsilon_{\mathrm{rec}}}{1 - \gamma}.
\end{equation}
\end{proposition}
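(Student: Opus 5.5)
The plan is to construct a latent policy that imitates $\pi_a^*$ through the CVAE, and then compare it to $\pi_a^*$ using the Performance Difference Lemma. Since $\pi_z^*$ is optimal among latent policies, $J(\pi_z^*) \ge J(\tilde\pi_z)$ for any latent policy $\tilde\pi_z$. We therefore define the surrogate $\tilde\pi_z(s) := \encode(s, \pi_a^*(s))$. Its induced action is $\tilde a(s) = \decode(\encode(s,\pi_a^*(s)),s)$, so
\[
\Delta_{\mathrm{exploit}} \le J(\pi_a^*) - J(\tilde\pi_z).
\]

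The Performance Difference Lemma, applied with the roles arranged so that the advantage of the optimal policy appears, gives
\[
J(\pi_a^*) - J(\tilde\pi_z) = \frac{1}{1-\gamma}\,\mathbb{E}_{s\sim d^{\tilde\pi_z}}\big[Q^*(s,\pi_a^*(s)) - Q^*(s,\tilde a(s))\big],
\]
with $Q^* = Q^{\pi_a^*}$. Here $d^{\tilde\pi_z}$ is the normalized discounted state visitation of the surrogate. This is exactly $-\frac{1}{1-\gamma}\mathbb{E}_{d^{\tilde\pi_z}}[A^{\pi_a^*}(s,\tilde a(s))]$, because $V^{\pi_a^*}(s) = Q^*(s,\pi_a^*(s))$ for a deterministic optimal policy. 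The $L_Q$-Lipschitz property of $Q^*$ in actions bounds the integrand pointwise by $L_Q\|\pi_a^*(s) - \tilde a(s)\|$. Jensen's inequality (Cauchy--Schwarz) then gives
\[
\mathbb{E}\|\cdot\| \le \big(\mathbb{E}\|\cdot\|^2\big)^{1/2},
\]
which puts the error in the same root-mean-square form as $\varepsilon_{\mathrm{rec}}$.

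The main obstacle is the mismatch in distributions. The expectation is over states from $d^{\tilde\pi_z}$ with actions $\pi_a^*(s)$, whereas $\varepsilon_{\mathrm{rec}}$ is measured on $(s,a)\sim\mathcal{D}$. This is where Assumption~\ref{ass:coverage} has to be used. I would read it as saying that the reconstruction error on the relevant state--action pairs $(s,\pi_a^*(s))$, $s \sim d^{\tilde\pi_z}$, is controlled by the dataset reconstruction error; for instance, the optimal actions lie in the support of $\mathcal{D}$ and the reconstruction error there is no larger than its $\mathcal{D}$-average. Under that reading the RMS term is at most $\varepsilon_{\mathrm{rec}}$, and the bound $L_Q\varepsilon_{\mathrm{rec}}/(1-\gamma)$ follows.

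If the assumption instead only gives a bounded density ratio $C$, the same argument yields the bound with an extra factor $\sqrt{C}$. In that case I would check whether the assumption is stated with $C=1$ or whether the constant is absorbed into the claim.
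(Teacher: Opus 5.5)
Your proof is correct under the reading of the coverage assumption you spell out. It uses the same ingredients as the paper (Performance Difference Lemma, Lipschitzness of $Q^*$ between $a^*$ and its reconstruction $\tilde a$, coverage), but assembles them differently, and the difference matters.

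The paper compares $a^*$ per state with the $Q^*$-greedy latent action $z^*=\arg\max_z Q^*(s,\decode(z,s))$. It then ``aggregates over $d_{\pi_a^*}$'' and replaces $\mathbb{E}_{d_{\pi_a^*}}\|a^*-\tilde a\|$ by $\varepsilon_{\mathrm{rec}}$ with an $\approx$. You instead compare with the explicit surrogate $\tilde\pi_z$, use only $J(\pi_z^*)\ge J(\tilde\pi_z)$, and orient the lemma with $\pi_a^*$ as the reference policy. The integrand is then a genuine $Q^*$-difference, so the stated Lipschitz hypothesis applies, and Jensen correctly yields the RMS quantity.

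This is more rigorous on two counts. You never need $\pi_z^*$ to be greedy with respect to $Q^*$ on $\mathcal{M}_a$; it is greedy with respect to $Q^{\pi_z^*}$, which differs in general. You also avoid the paper's $d_{\pi_a^*}$-weighting, which the lemma does not deliver with $Q^*$-advantages: weighting by $d_{\pi_a^*}$ comes paired with $A^{\pi_z^*}$ and would require Lipschitzness of $Q^{\pi_z^*}$ instead.

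The paper's route gains only agreement with the literal wording of Assumption~\ref{ass:coverage}, and that wording is insufficient. Consider the following example.
\begin{itemize}
\item Reward is $1$ at $s_0$ and $\max(0,1-L\|a\|)$ at an absorbing state $s_b$.
\item Action $0$ keeps you at $s_0$; any other action sends you to $s_b$. Hence $V^*(s_0)=V^*(s_b)$, $Q^*(s_0,\cdot)$ is flat, and $L_Q=L$.
\item $\decode(\cdot,s_0)$ misses $0$ by exactly $\epsilon$, and $\decode(\cdot,s_b)$ stays outside the ball of radius $1/L$.
\item $\mathcal{D}$ consists of pairs $(s_0,0)$.
\end{itemize}
With $\pi_a^*\equiv 0$ one gets $d_{\pi_a^*}=\delta_{s_0}$, coverage holds, and $\varepsilon_{\mathrm{rec}}=\epsilon$. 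Yet every latent policy leaves $s_0$ after one step and earns nothing afterwards, so the gap is $\gamma/(1-\gamma)$ for every $\epsilon$.

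The coverage condition along $d^{\tilde\pi_z}$ that you flagged as the main obstacle is therefore not a weakness of your proof. It is what a correct version of the statement requires; the alternative is Lipschitzness of $Q^{\pi_z^*}$ together with coverage along $d_{\pi_a^*}$.
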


\begin{assumption}[Coverage]
\label{ass:coverage}
The offline dataset $\mathcal{D}$ provides sufficient coverage of near-optimal actions: $\mathrm{supp}(d_{\pi^*}) \subseteq \mathrm{supp}(d_{\pi_\beta})$, so that $\varepsilon_{\mathrm{rec}}$ approximates the reconstruction error under $\pi^*$'s visitation distribution.
\end{assumption}

\begin{proof}
By the Performance Difference Lemma~\cite{kakade2002approximately}: $J(\pi') - J(\pi) = \frac{1}{1-\gamma} \mathbb{E}_{s \sim d_{\pi'}}[\mathbb{E}_{a \sim \pi'}[A^\pi(s,a)]]$. For any state $s$, let $a^* = \pi_a^*(s)$ and $\tilde{a} = \decode(\encode(a^*,s),s)$ be its reconstruction. Since $z^* = \arg\max_z Q^*(s, \decode(z,s))$ is the optimal latent action, $Q^*(s, \decode(z^*,s)) \geq Q^*(s, \tilde{a})$. By Lipschitz continuity: $Q^*(s, a^*) - Q^*(s, \decode(z^*,s)) \leq Q^*(s, a^*) - Q^*(s, \tilde{a}) \leq L_Q \|a^* - \tilde{a}\|$. Aggregating over $d_{\pi_a^*}$ and applying Assumption~\ref{ass:coverage} yields the result.
\end{proof}

\begin{remark}[Dataset Quality]
The bound is tight when $\mathcal{D}$ contains near-optimal demonstrations. For suboptimal datasets, $\varepsilon_{\mathrm{rec}}$ may be large for actions outside behavioral support, yielding a larger gap. This formalizes why Phase~2 ($\alpha \to 1$) is necessary: the latent space inherits the demonstrator's performance ceiling.
\end{remark}

\begin{remark}[Worst-Case Alternative]
When coverage (Assumption~\ref{ass:coverage}) cannot be verified, a distribution-free bound holds using the supremum reconstruction error $\varepsilon_{\mathrm{rec}}^{\sup} = \sup_{s,a} \big{\|}a - \decode(\encode(a,s),s) \big{\|}$:
\begin{equation}
    \Delta_{\mathrm{exploit}} \leq \frac{L_Q \, \varepsilon_{\mathrm{rec}}^{\sup}}{1 - \gamma}.
\end{equation}
This bound is looser but does not depend on distributional assumptions.
\end{remark}

\subsection{The \spaars\ Curriculum}
To achieve asymptotic optimality, \spaars\ utilizes a deterministic mixing schedule $\alpha(t) \in [0, 1]$ to govern control between the latent policy $\pi_z(s)$ and a separately trained raw policy $\pi_{\mathrm{raw}}(s)$. The executed action is:
\begin{equation}
    a(t) = (1 - \alpha(t)) \cdot \decode(\pi_z(s), s) + \alpha(t) \cdot \pi_{\mathrm{raw}}(s).
\end{equation}

\subsubsection*{Phase 1: Latent Exploration ($\alpha = 0$).}
The policy operates solely within $\mathcal{Z}$. RND intrinsic rewards over the latent space promote maximal state-space coverage. Both $\pi_z$ and $\pi_{\mathrm{raw}}$ train concurrently from a shared replay buffer $\mathcal{B}$: while $\pi_z$ is updated via RL in latent space, $\pi_{\mathrm{raw}}$ trains via behavioral cloning on the same buffer, minimizing $\mathbb{E}_{(s,a) \sim \mathcal{B}}[\|\pi_{\mathrm{raw}}(s) - a\|^2]$. This ensures that $\pi_{\mathrm{raw}}$ is distributionally aligned with $\decode(\pi_z(s), s)$ before blending begins. The phase concludes when (i)~the RND plateau (Definition~\ref{def:rnd_plateau}) signals exhaustion of latent exploration, and (ii)~the BC loss $\mathcal{L}_{\mathrm{BC}} < \varepsilon_{\mathrm{BC}}$, ensuring $\pi_{\mathrm{raw}}$ is competent.

\begin{definition}[RND Plateau]
\label{def:rnd_plateau}
The RND intrinsic reward has plateaued at time $T$ if:
\begin{equation}
    \frac{|\bar{r}_{\mathrm{int}}(T) - \bar{r}_{\mathrm{int}}(T - W)|}{|\bar{r}_{\mathrm{int}}(T - W)|} < \tau
\end{equation}
where $\bar{r}_{\mathrm{int}}(t)$ is the exponential moving average of episodic intrinsic reward and $W$ is the window size. We use $\tau = 0.01$ (1\% relative change).
\end{definition}

\subsubsection*{Phase 2: Raw Exploitation ($\alpha \to 1$).}
Upon exhausting latent exploration, the schedule fades $\alpha$ from 0 to 1. Because $\pi_{\mathrm{raw}}$ has been pre-aligned via BC during Phase~1, the initial blending operates safely within the local neighborhood of $\mathcal{M}_a$. As $\alpha \to 1$, $\pi_{\mathrm{raw}}$ bypasses the $\varepsilon_{\mathrm{rec}}$ constraint, enabling unconstrained exploitation to reach $\max_{\pi_a} J(\pi_a)$.

\begin{proposition}[Calibration Stability Lemma]
\label{prop:calibration}
Let $Q$ be $L_Q$-Lipschitz and $(\delta, \varepsilon_\mathcal{M})$-calibrated on $\mathcal{M}_a$, i.e., $\Pr[|Q(s,a) - Q^*(s,a)| > \varepsilon_\mathcal{M}] \leq \delta$ for $a \in \mathcal{M}_a$. Then for the blended action $a(\alpha) = (1{-}\alpha)\,a_z + \alpha\,a_r$:
\begin{equation}
    |Q(s, a(\alpha)) - Q^*(s, a(\alpha))| \leq \varepsilon_\mathcal{M} + L_Q \cdot \alpha \cdot \|a_r - a_z\|
\end{equation}
with probability $\geq 1 - \delta$.
\end{proposition}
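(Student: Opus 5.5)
The plan is a triangle-inequality argument that anchors the blended action at the on-manifold point $a_z = \decode(\pi_z(s), s)$. On this point the calibration hypothesis applies, and the error incurred by leaving the manifold is paid for with Lipschitz continuity. Write $a(\alpha) - a_z = \alpha(a_r - a_z)$, so that $\|a(\alpha) - a_z\| = \alpha\|a_r - a_z\|$. Then decompose
\begin{align*}
Q(s,a(\alpha)) - Q^*(s,a(\alpha)) &= \bigl[Q(s,a(\alpha)) - Q(s,a_z)\bigr] + \bigl[Q(s,a_z) - Q^*(s,a_z)\bigr] \\
&\quad + \bigl[Q^*(s,a_z) - Q^*(s,a(\alpha))\bigr].
\end{align*}
The middle term is controlled by calibration. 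Since $a_z \in \mathcal{M}_a$ by construction, it is at most $\varepsilon_\mathcal{M}$ in absolute value on an event of probability at least $1-\delta$. All remaining steps are deterministic, so the probability statement carries through unchanged.

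The main obstacle is the two outer terms. Bounding them separately by Lipschitz continuity of $Q$ and of $Q^*$ gives $2L_Q\alpha\|a_r - a_z\|$, which is a factor of two worse than the stated bound. To get the constant $1$, I would instead group the outer terms as the increment of the error function $g(a) := Q(s,a) - Q^*(s,a)$ between $a_z$ and $a(\alpha)$. I would then read the hypothesis ``$Q$ is $L_Q$-Lipschitz'' in the sense needed here: the approximation error $Q - Q^*$ is $L_Q$-Lipschitz in actions along the segment $[a_z, a_r]$. This gives
\[
|g(a(\alpha)) - g(a_z)| \le L_Q\,\alpha\,\|a_r - a_z\|,
\]
and combining with the calibration bound on $|g(a_z)|$ yields exactly
\[
|Q(s,a(\alpha)) - Q^*(s,a(\alpha))| \le \varepsilon_\mathcal{M} + L_Q\,\alpha\,\|a_r - a_z\|.
\]

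If one insists that $L_Q$ is a separate Lipschitz constant for each of $Q$ and $Q^*$ (as for $Q^*$ in Proposition~\ref{prop:expgap}), I would state the result with $2L_Q$, or equivalently define $L_Q$ as the common constant of the difference. I would add a remark that this only changes constants. The bound is also tight at $\alpha = 0$, where it reduces to the calibration guarantee, and it grows linearly in $\alpha$. This linear growth motivates the Phase~1 BC alignment: it keeps $\|a_r - a_z\|$ small before blending begins.
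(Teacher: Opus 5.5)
Your argument is the paper's argument. Anchor at the on-manifold point $a_z$, apply calibration there, charge the displacement $\alpha\|a_r - a_z\|$ to Lipschitz continuity, and close with the triangle inequality.

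Your factor-of-two concern is correct, and the paper overlooks it. Its proof bounds the $Q^*$ and $Q$ increments separately by $L_Q\alpha\|a_r - a_z\|$ each, which only yields $\varepsilon_\mathcal{M} + 2L_Q\alpha\|a_r - a_z\|$. The constant-one bound genuinely fails under the literal hypotheses. For example, on $\mathbb{R}^2$ take $Q^*(s,a) = L_Q a_2$, $Q(s,a) = -L_Q a_2$, $\mathcal{M}_a = \{a_2 = 0\}$, $a_z = 0$ and $a_r = e_2$: this gives error $2L_Q\alpha$ with $\varepsilon_\mathcal{M} = 0$. So your reading that $Q - Q^*$ is $L_Q$-Lipschitz, or the $2L_Q$ version, is exactly what a valid proof needs.
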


\begin{proof}
At $\alpha = 0$, $a(0) = a_z \in \mathcal{M}_a$, so $|Q(s,a_z) - Q^*(s,a_z)| \leq \varepsilon_\mathcal{M}$ by calibration. For $\alpha > 0$, $\|a(\alpha) - a_z\| = \alpha \|a_r - a_z\|$. Lipschitz continuity of $Q^*$ gives $|Q^*(s,a(\alpha)) - Q^*(s,a_z)| \leq L_Q \alpha \|a_r - a_z\|$, and similarly for the learned $Q$. The triangle inequality yields the result.
\end{proof}

\begin{proposition}[Transition Smoothness Bound]
\label{prop:transition}
Let $\pi_{\mathrm{raw}}$ achieve BC loss $\varepsilon_{\mathrm{BC}}$ on the shared buffer during Phase~1. Then the blended policy's deviation from the manifold satisfies:
\begin{equation}
    \mathbb{E}_{s \sim d_{\pi_z}}\!\bigl[\|\pi_\alpha(s) - \decode(\pi_z(s),s)\|\bigr] \leq \alpha \sqrt{\varepsilon_{\mathrm{BC}}}.
\end{equation}
\end{proposition}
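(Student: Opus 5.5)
The plan is to work pointwise in $s$ and then take expectations. Write $\pi_\alpha(s) = (1-\alpha)\,\decode(\pi_z(s),s) + \alpha\,\pi_{\mathrm{raw}}(s)$, as in the blending rule of the \spaars\ curriculum. Subtracting the manifold action gives the exact identity
\begin{equation}
\pi_\alpha(s) - \decode(\pi_z(s),s) = \alpha\bigl(\pi_{\mathrm{raw}}(s) - \decode(\pi_z(s),s)\bigr).
\end{equation}
This is the same linear-interpolation step used in the proof of Proposition~\ref{prop:calibration}. Taking norms and expectations over $s\sim d_{\pi_z}$, the left-hand side of the claim equals $\alpha\,\mathbb{E}_{s\sim d_{\pi_z}}[\|\pi_{\mathrm{raw}}(s)-\decode(\pi_z(s),s)\|]$. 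The deviation therefore scales linearly in $\alpha$, and what remains is to bound the raw-versus-decoder discrepancy by $\sqrt{\varepsilon_{\mathrm{BC}}}$.

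For that step I apply Jensen (Cauchy--Schwarz) to get
\begin{equation}
\mathbb{E}\bigl[\|\pi_{\mathrm{raw}}(s)-\decode(\pi_z(s),s)\|\bigr] \le \Bigl(\mathbb{E}\bigl[\|\pi_{\mathrm{raw}}(s)-\decode(\pi_z(s),s)\|^2\bigr]\Bigr)^{1/2}.
\end{equation}
I then identify this mean-squared quantity with the Phase~1 BC loss. During Phase~1 ($\alpha=0$), every action written into the shared buffer $\mathcal{B}$ is $a=\decode(\pi_z(s),s)$, and the visited states are distributed according to $d_{\pi_z}$. Hence
\begin{equation}
\mathcal{L}_{\mathrm{BC}} = \mathbb{E}_{(s,a)\sim\mathcal{B}}\bigl[\|\pi_{\mathrm{raw}}(s)-a\|^2\bigr] = \mathbb{E}_{s\sim d_{\pi_z}}\bigl[\|\pi_{\mathrm{raw}}(s)-\decode(\pi_z(s),s)\|^2\bigr] \le \varepsilon_{\mathrm{BC}}.
\end{equation}
Combining the three displays yields the bound $\alpha\sqrt{\varepsilon_{\mathrm{BC}}}$.

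The main obstacle is justifying that last identification. The buffer's state marginal must match $d_{\pi_z}$, and its actions must be exactly the current decoder outputs. In practice $\mathcal{B}$ may contain offline data, stochastic exploration noise in $z$, or actions from earlier iterates of $\pi_z$. I would handle this by making the assumption explicit: the BC loss is measured on Phase~1 on-policy data, with the deterministic mode $\decode$ and the current $\pi_z$. Under that assumption the equality holds.

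If the buffer is instead a mixture, I would fall back on the following. Assume $d_{\pi_z}$ has density ratio at most $C$ with respect to the buffer's state marginal. This yields the weaker bound $\alpha\sqrt{C\,\varepsilon_{\mathrm{BC}}}$, and it recovers the stated result when $C=1$. Exploration noise $z\sim\mathcal{N}(\mu,\sigma^2 I)$ would similarly add a variance term. I would remark that this term vanishes under the deterministic-mode convention the paper adopts.
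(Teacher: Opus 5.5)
Your proof is correct and takes the same route as the paper's: the pointwise identity $\pi_\alpha(s) - \decode(\pi_z(s),s) = \alpha\bigl(\pi_{\mathrm{raw}}(s) - \decode(\pi_z(s),s)\bigr)$, then identifying the mean-squared discrepancy with the Phase~1 BC loss because the buffer stores decoder actions, then Jensen's inequality. The paper uses without stating it that the buffer's state marginal is $d_{\pi_z}$ and that its actions are the current deterministic decoder outputs; your explicit assumption spells this out, which tightens the argument rather than departing from it.
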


\begin{proof}
$\|\pi_\alpha(s) - \decode(\pi_z(s),s)\| = \alpha \|\pi_{\mathrm{raw}}(s) - \decode(\pi_z(s),s)\|$. During Phase~1, $\mathcal{B}$ stores $(s, a)$ with $a = \decode(\pi_z(s),s)$, so $\mathbb{E}[\|\pi_{\mathrm{raw}}(s) - a\|^2] = \varepsilon_{\mathrm{BC}}$. Jensen's inequality gives $\mathbb{E}[\|\pi_{\mathrm{raw}}(s) - a\|] \leq \sqrt{\varepsilon_{\mathrm{BC}}}$.
\end{proof}

\begin{corollary}
Combining Propositions~\ref{prop:calibration} and~\ref{prop:transition}: $|Q(s, a(\alpha)) - Q^*(s, a(\alpha))| \leq \varepsilon_\mathcal{M} + L_Q \cdot \alpha \cdot \sqrt{\varepsilon_{\mathrm{BC}}}$, with high probability. Thus BC training during Phase~1 directly controls curriculum stability.
\end{corollary}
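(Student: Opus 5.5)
The plan is to substitute the manifold-deviation estimate of Proposition~\ref{prop:transition} into the error term of Proposition~\ref{prop:calibration}. Fix a state $s$ and write $a_z = \decode(\pi_z(s),s)$ and $a_r = \pi_{\mathrm{raw}}(s)$, so that $a(\alpha) = (1-\alpha)a_z + \alpha a_r$. Proposition~\ref{prop:calibration} gives, with probability at least $1-\delta$,
\[
|Q(s,a(\alpha)) - Q^*(s,a(\alpha))| \leq \varepsilon_\mathcal{M} + L_Q\,\alpha\,\|a_r - a_z\|.
\]
The only state-dependent quantity on the right is $\|a_r - a_z\|$. This is exactly the quantity that Proposition~\ref{prop:transition} controls: its proof shows $\|\pi_\alpha(s) - a_z\| = \alpha\|a_r - a_z\|$ and $\mathbb{E}_{s\sim d_{\pi_z}}\|a_r - a_z\| \leq \sqrt{\varepsilon_{\mathrm{BC}}}$.

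The main obstacle is the mismatch between the two results. Proposition~\ref{prop:calibration} is a pointwise, high-probability statement, whereas Proposition~\ref{prop:transition} bounds $\|a_r - a_z\|$ only in expectation over $d_{\pi_z}$. There are two ways to reconcile them, and I would present both.

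\textbf{In expectation.} Take expectations over $s \sim d_{\pi_z}$ on the calibration event and use linearity. This gives
\[
\mathbb{E}_{s\sim d_{\pi_z}}\bigl[|Q(s,a(\alpha)) - Q^*(s,a(\alpha))|\bigr] \leq \varepsilon_\mathcal{M} + L_Q\,\alpha\sqrt{\varepsilon_{\mathrm{BC}}}.
\]
I would regard this as the rigorous reading of ``with high probability''. It needs the calibration event to hold for all states in the support, or it needs the contribution of the failure event to be bounded separately, for example by assuming bounded $Q$.

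\textbf{Per state.} Apply Markov's inequality to the nonnegative variable $\|a_r - a_z\|$. This gives $\Pr_s\bigl[\|a_r - a_z\| > \sqrt{\varepsilon_{\mathrm{BC}}}/\eta\bigr] \leq \eta$. A union bound with the calibration failure then yields, with probability at least $1 - \delta - \eta$,
\[
|Q(s,a(\alpha)) - Q^*(s,a(\alpha))| \leq \varepsilon_\mathcal{M} + L_Q\,\alpha\sqrt{\varepsilon_{\mathrm{BC}}}/\eta.
\]
The corollary's statement, with the $\eta$ factor absorbed into ``high probability'', then follows.

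Finally, I would remark that the same conclusion holds uniformly if the BC error is controlled in sup-norm on the buffer support. In either reading, the bound scales with $\sqrt{\varepsilon_{\mathrm{BC}}}$, which justifies the stated interpretation that Phase~1 BC training controls curriculum stability.
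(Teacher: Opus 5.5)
Your route is the paper's. There the corollary is justified only by substituting the $\sqrt{\varepsilon_{\mathrm{BC}}}$ bound of Proposition~\ref{prop:transition} for $\|a_r - a_z\|$ in Proposition~\ref{prop:calibration}, with no argument for replacing an expectation over $s \sim d_{\pi_z}$ by a pointwise quantity. You have found exactly the point the paper skips. Your in-expectation version, with your caveat about the calibration failure event (e.g.\ handled via bounded $Q$), is a rigorous form of what the paper asserts.

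The step that fails is the last sentence of your per-state route. The $1/\eta$ cannot be absorbed into ``high probability''. In fact the literal statement (pointwise, constant $L_Q\alpha\sqrt{\varepsilon_{\mathrm{BC}}}$, with high probability) does not follow from the two propositions at all. Here is a configuration consistent with both of them:
\begin{itemize}
\item $\|a_r - a_z\| = \sqrt{\varepsilon_{\mathrm{BC}}/(1-p)}$ on a set of states of mass $1-p$, and $0$ elsewhere, so the BC loss is exactly $\varepsilon_{\mathrm{BC}}$;
\item $Q^*$ vanishes on $\mathcal{M}_a$ and rises at rate $L_Q$ along the off-manifold direction $a_r - a_z$;
\item $Q \equiv -\varepsilon_{\mathcal{M}}$.
\end{itemize}
Then the corollary's inequality fails with probability $1-p$, which can be made arbitrarily close to $1$.

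So the statement as written needs your uniform (sup-norm) BC assumption. Otherwise, state it in expectation, or with an explicit confidence penalty. Since the hypothesis is a second-moment bound, apply Markov to $\|a_r - a_z\|^2$; that gives $\sqrt{\varepsilon_{\mathrm{BC}}/\eta}$ instead of $\sqrt{\varepsilon_{\mathrm{BC}}}/\eta$.

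Finally, you inherit the constant of Proposition~\ref{prop:calibration}. Its proof bounds both $|Q(s,a(\alpha)) - Q(s,a_z)|$ and $|Q^*(s,a(\alpha)) - Q^*(s,a_z)|$ by $L_Q\alpha\|a_r - a_z\|$. The triangle inequality therefore actually yields $2L_Q$ rather than $L_Q$.
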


\subsubsection*{The Overall Algorithm.}
The complete \spaars\ training procedure is given below.

\begin{table}[h]
\centering
\begin{tabular}{l}
\hline
\textbf{Algorithm 1: \spaars\ Training} \\
\hline
\textbf{Input:} Offline dataset $\mathcal{D}$, CVAE architecture, RL algorithm \\
\textbf{Output:} Trained policy $\pi_{\mathrm{raw}}$ \\
\hline
\textit{// Phase 0: CVAE Pretraining} \\
1. Train CVAE $(q_\phi, p_\theta, p_\psi)$ on $\mathcal{D}$ via ELBO; freeze $\decode$. \\
\\
\textit{// Phase 1: Latent Exploration ($\alpha = 0$)} \\
2. Initialize $\pi_z$, $\pi_{\mathrm{raw}}$, critic $Q$, shared buffer $\mathcal{B}$. \\
3. \textbf{while} RND\_plateau \textbf{or} $\mathcal{L}_{\mathrm{BC}} \le \varepsilon_{\mathrm{BC}}$, {\bf do}: \\
\qquad a. Execute $a = \decode(\pi_z(s), s)$.\\
\qquad b. Store $(s,a,r,s')$ in $\mathcal{B}$. \\
\qquad c. Update $\pi_z$ via SAC in $z$-space using $\mathcal{B}$ + RND bonus. \\
\qquad d. Update $\pi_{\mathrm{raw}}$ via BC: $\min \|\pi_{\mathrm{raw}}(s) - a\|^2$ on $\mathcal{B}$. \\
\qquad e. Update $Q$ using $\mathcal{B}$. \\
\\
\textit{// Phase 2: Curriculum Transition ($\alpha: 0 \to 1$)} \\
4. \textbf{while} $\alpha$ in schedule$(0 \to 1)$, {\bf do}: \\
\qquad a. Execute $a = (1{-}\alpha)\decode(\pi_z(s),s) + \alpha\,\pi_{\mathrm{raw}}(s)$. \\
\qquad b. Store $(s,a,r,s')$ in $\mathcal{B}$. \\
\qquad c. Update $\pi_{\mathrm{raw}}$ via SAC in action space using $\mathcal{B}$. \\
\qquad d. Update $Q$ using $\mathcal{B}$. \\
\\
\textit{// Phase 3: Raw Exploitation ($\alpha = 1$)} \\
5. Continue training $\pi_{\mathrm{raw}}$ via SAC until convergence. \\
\\
\textbf{return} policy $\pi_{\mathrm{raw}}$ \\
\hline
\end{tabular}
\end{table}
The algorithm above describes the \textbf{Schedule Variant} ($\alpha: 0 \to 1$). For the \textbf{Gate Variant} (Section~\ref{subsec:gate}), step 4a is replaced by the gate evaluation in Eq.~\eqref{eq:gate_condition}: rather than blending proportional to $\alpha$, the gate commits fully to either $\pi_z$ or $\pi_{\mathrm{raw}}$ for the entire $H$-step window based on the per-state exploitation advantage. The gate variant never enters Phase~3 (there is no global $\alpha = 1$); both policies remain active indefinitely. For standalone \spaars\ (CVAE-based), the gate decision is made per step rather than per $H$-step window since there is no temporal skill commitment.

\subsection{\spaars-SUPE: Instantiation with Temporal Skills}
\label{subsec:spaars_supe}

The CVAE-based formulation above requires only unordered $(s,a)$ pairs and provides single-step skill representations. A more powerful instantiation, \textbf{\spaars-SUPE}, replaces the CVAE with OPAL~\cite{ajay2021opal} temporal skill pretraining, gaining richer exploration structure at the cost of requiring trajectory chunks.

\subsubsection*{OPAL as the Skill Encoder.}
OPAL trains a BiGRU encoder $q_\phi(z \mid \tau_{1:H})$ that maps $H$-step trajectory chunks $\tau_{1:H} = (s_1, a_1, \ldots, s_H, a_H)$ to a low-dimensional skill embedding $z \in \mathcal{Z} \subset \mathbb{R}^k$, and a step-wise decoder $p_\theta(a_t \mid s_t, z)$ that generates actions conditioned on the current state and skill. After pretraining, OPAL IQL~\cite{kostrikov2021offline} is used to learn a latent policy $\pi_z(z \mid s)$ and a value function $V(s)$ entirely in $\mathcal{Z}$, providing a warm-started agent before any online interaction.

\subsubsection*{Compatibility with the Theoretical Framework.}
The latent skill space $\mathcal{Z}$ is directly analogous to the CVAE latent in Section~\ref{sec:method}: $k = \mathrm{skill\_dim} \ll d$, so the variance-reduction bound (Proposition~\ref{prop:variance}) and exploitation gap bound (Proposition~\ref{prop:expgap}) apply to the OPAL decoder $p_\theta$ in place of $p_\theta^{\mathrm{CVAE}}$.

\subsubsection*{Shared Critic and Warm-Start Advantage.}
The key distinction from SUPE is that \spaars-SUPE uses a \emph{shared critic} $Q(s, a)$ that evaluates raw actions from both $\pi_z$ (OPAL IQL decoded) and $\pi_{\mathrm{raw}}$ (SAC actor) in the same action space. SUPE uses a separate $Q(s, z)$ critic over the skill space, which cannot directly compare latent and raw policy quality. The shared critic enables the advantage gate (Eq.~\ref{eq:gate_condition}) and eliminates the cold-start period: because $\pi_z$ inherits the pretrained OPAL IQL policy, the agent begins online training with a competent latent policy rather than a random one.

\subsubsection*{$H$-Step Commitment.}
At each $H$-step boundary, the agent commits to either $\pi_z$ or $\pi_{\mathrm{raw}}$ for the full $H$-step window. This temporal commitment is consistent with the options framework underpinning the gate (Section~\ref{subsec:gate}): the $H$-step window defines the option duration, and the gate's termination decision occurs at each boundary.

\subsection{Advantage-Gated Mode Selection}
\label{subsec:gate}

The $\alpha$ schedule described above is a global, time-based curriculum: once $\alpha \to 1$, the latent policy $\pi_z$ is permanently retired. This creates a dilemma. The schedule must eventually reach $\alpha = 1$ to close the exploitation gap, but doing so destroys the temporal abstraction and exploration capabilities that $\pi_z$ provides. In long-horizon navigation tasks, $\pi_z$ is strictly superior in states far from the goal---the decoder ceiling only matters in states where precise, task-specific actions are needed.

We resolve this with a \emph{state-dependent} mode selection mechanism inspired by the Option-Critic architecture~\cite{bacon2017option}. Rather than globally retiring $\pi_z$, we let the shared critic $Q(s,a)$ decide \emph{per state} whether raw or latent control is superior.

\subsubsection*{Theoretical Motivation.}
In the options framework~\cite{sutton1999options}, a set of options $\Omega = \{\omega_1, \ldots, \omega_n\}$ execute sub-policies with termination conditions $\beta_\omega(s)$. The Option-Critic Termination Gradient Theorem~\cite{bacon2017option} states:
\begin{equation}
\label{eq:termination_gradient}
    \frac{\partial J}{\partial \vartheta} = -\sum_{s', \omega} \mu_\Omega(s', \omega) \frac{\partial \beta_{\omega, \vartheta}(s')}{\partial \vartheta} A_\Omega(s', \omega)
\end{equation}
where $A_\Omega(s', \omega) = Q_\Omega(s', \omega) - V_\Omega(s')$ is the advantage of continuing with option $\omega$, and $\mu_\Omega$ is the discounted option-state visitation. The gradient increases termination probability when continuing with the current option is disadvantageous.

In our setting, we instantiate this with two options: $\omega_z$ (latent skill, $H$-step committed) and $\omega_{\mathrm{raw}}$ (raw policy, $H$-step committed). Rather than learning a parameterized termination function, we observe that the shared critic $Q(s, a)$ already provides all the information needed to evaluate both options. The \emph{exploitation advantage} at state $s$ is:
\begin{equation}
\label{eq:exploitation_advantage}
    A_{\mathrm{exploit}}(s) = Q(s, \pi_{\mathrm{raw}}(s)) - Q(s, \decode(\pi_z(s), s)).
\end{equation}
When $A_{\mathrm{exploit}}(s) > 0$, the raw policy closes the decoder ceiling at $s$. When $A_{\mathrm{exploit}}(s) \leq 0$, the latent policy is superior---its temporal abstraction and exploratory structure outweigh any decoder loss.

\begin{proposition}[State-Dependent Regret Bound]
\label{prop:gate_regret}
Let $g: \mathcal{S} \to \{z, \mathrm{raw}\}$ be a deterministic gate that selects $\pi_{\mathrm{raw}}$ when $A_{\mathrm{exploit}}(s) > 0$ and $\pi_z$ otherwise. The total regret of the gated policy $\pi_g$ relative to the oracle that selects the best option per state satisfies:
\begin{equation}
    J(\pi^*_{\mathrm{oracle}}) - J(\pi_g) \leq \frac{1}{1-\gamma} \mathbb{E}_{s \sim d_{\pi_g}}\!\bigl[\varepsilon_Q(s)\bigr]
\end{equation}
where $\varepsilon_Q(s) = |Q(s,a) - Q^*(s,a)|$ is the critic approximation error.
\end{proposition}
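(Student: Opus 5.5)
The plan is to apply the Performance Difference Lemma, as in the proof of Proposition~\ref{prop:expgap}, with the gated policy $\pi_g$ as the reference policy. Write $a_z(s) = \decode(\pi_z(s),s)$ and $a_r(s) = \pi_{\mathrm{raw}}(s)$. Define the oracle as $\pi^*_{\mathrm{oracle}}(s) = \arg\max_{a \in \{a_z(s), a_r(s)\}} Q^*(s,a)$, and the gate as $\pi_g(s) = \arg\max_{a \in \{a_z(s), a_r(s)\}} Q(s,a)$.

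\textbf{Step 1: per-state decomposition.} Applying the Performance Difference Lemma with $\pi_g$ on the right-hand side gives
\begin{equation*}
J(\pi^*_{\mathrm{oracle}}) - J(\pi_g) = -\frac{1}{1-\gamma}\,\mathbb{E}_{s\sim d_{\pi_g}}\bigl[\mathbb{E}_{a\sim\pi_g}[A^{\pi^*_{\mathrm{oracle}}}(s,a)]\bigr],
\end{equation*}
so the visitation distribution is $d_{\pi_g}$, which is what the statement requires. The task then reduces to bounding the per-state selection loss
\begin{equation*}
\ell(s) = Q^*(s,\pi^*_{\mathrm{oracle}}(s)) - Q^*(s,\pi_g(s)) \ge 0.
\end{equation*}

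\textbf{Step 2: bound $\ell(s)$ by the critic error.} This uses the standard greedy-selection argument. If the gate and the oracle agree at $s$, then $\ell(s) = 0$. If they disagree, take the case where the oracle picks $a_r$ and the gate picks $a_z$. Then $Q(s,a_z) \ge Q(s,a_r)$, and so
\begin{equation*}
\ell(s) = Q^*(s,a_r) - Q^*(s,a_z) \le \bigl[Q^*(s,a_r) - Q(s,a_r)\bigr] + \bigl[Q(s,a_z) - Q^*(s,a_z)\bigr],
\end{equation*}
which is at most the sum of the critic errors at the two candidate actions. The symmetric case is handled the same way. Interpreting $\varepsilon_Q(s)$ as the critic error evaluated at the candidate actions of $s$ (their maximum, absorbing a factor of $2$ into the definition), this gives $\ell(s) \le \varepsilon_Q(s)$.

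\textbf{Step 3: identify the advantage with $\ell(s)$ and aggregate.} The expression in Step 1 involves $A^{\pi^*_{\mathrm{oracle}}}$, the advantage relative to the oracle's own value function, rather than $Q^*$. To close the argument I would make the same simplification the paper makes in the proof of Proposition~\ref{prop:expgap}: treat $Q^*$ as the relevant action-value for both candidates, i.e.\ take the oracle to act greedily with respect to $Q^*$. Then $-\mathbb{E}_{a\sim\pi_g}[A^{\pi^*_{\mathrm{oracle}}}(s,a)] = \ell(s)$. Substituting the bound from Step 2 and taking the expectation over $d_{\pi_g}$ yields the claim.

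\textbf{Main obstacle.} I expect Step 3 to be the hardest part. Rigorously, the advantage is taken with respect to $Q^{\pi^*_{\mathrm{oracle}}}$, not $Q^*$, so the oracle would need to be defined as greedy with respect to its own value function, or the paper's identification $Q^{\pi^*_{\mathrm{oracle}}} \approx Q^*$ must be stated as an assumption. My first attempt would be to define the oracle as the optimal policy over the two-option action set $\{a_z(s), a_r(s)\}$, whose action-value is the optimal $Q$ of that restricted MDP. I would then read $Q^*$ and $\varepsilon_Q$ in the statement as referring to this restricted optimum, which makes Steps 1--3 exact up to the constant absorbed into $\varepsilon_Q$.
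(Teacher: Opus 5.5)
Your proposal follows the paper's proof: the same greedy-selection argument bounds the per-state loss by at most $2\varepsilon_Q(s)$, and the Performance Difference Lemma is then applied over $d_{\pi_g}$ with the factor $2$ absorbed into $\varepsilon_Q$. Your Step~3 is more careful than the paper, which silently substitutes $Q^*$ for $Q^{\pi^*_{\mathrm{oracle}}}$ inside the advantage; your reading of the oracle and $Q^*$ as the optimum of the MDP restricted to the two candidate actions is what actually makes that step valid, since with the unconstrained $Q^*$ the bound can genuinely fail (e.g.\ when the gate's choice is nearly as good under an optimal unconstrained continuation but leads to states where neither candidate action is any good).
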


\begin{proof}
At each state $s$, the gate selects the option with higher estimated Q-value. If $Q = Q^*$ (perfect critic), then $\pi_g(s) = \arg\max\{Q^*(s, \pi_{\mathrm{raw}}(s)), Q^*(s, \decode(\pi_z(s),s))\}$, which matches the oracle. Errors arise solely from critic misestimation. By the Performance Difference Lemma, the regret is bounded by the expected per-state advantage error, which is at most $\varepsilon_Q(s)$ per step, aggregated over the discounted visitation.
\end{proof}

\begin{remark}[Comparison with Global Schedule]
Under the $\alpha$ schedule, the total regret includes a term $\frac{1}{1-\gamma}\mathbb{E}_{s}[\mathbf{1}\{g^*(s) = z\} \cdot \Delta_{\mathrm{forget}}(s)]$ from states where $\pi_z$ would have been superior but was retired globally. The gate eliminates this term entirely: $\pi_z$ remains active wherever it is advantageous. The gate's regret depends only on critic quality $\varepsilon_Q$, not on schedule design.
\end{remark}

\subsubsection*{Gate Mechanism.}
At each $H$-step decision boundary, the gate evaluates:
\begin{enumerate}
    \item Sample $z \sim \pi_z(s)$, decode $a_z = \decode(z, s)$.
    \item Sample $a_{\mathrm{raw}} \sim \pi_{\mathrm{raw}}(s)$.
    \item Compute ensemble statistics: $\bar{Q}_{\mathrm{raw}} = \frac{1}{K}\sum_k Q_k(s, a_{\mathrm{raw}})$, \; $\bar{Q}_z = \frac{1}{K}\sum_k Q_k(s, a_z)$, \; $\sigma_{\mathrm{raw}} = \mathrm{std}_k(Q_k(s, a_{\mathrm{raw}}))$.
    \item Gate fires (use $\pi_{\mathrm{raw}}$) if:
    \begin{equation}
    \label{eq:gate_condition}
        \bar{Q}_{\mathrm{raw}} - \bar{Q}_z > m \quad \text{and} \quad \sigma_{\mathrm{raw}} < \sigma_{\max}
    \end{equation}
    where $m \geq 0$ is a margin (biasing toward $\pi_z$) and $\sigma_{\max}$ is a disagreement threshold.
\end{enumerate}

No new parameters are learned. The shared critic \emph{is} the gate---this is the cleanest instantiation of the Option-Critic insight that the advantage function naturally drives option termination.

\subsubsection*{Safeguards Against Premature Switching.}
Three mechanisms prevent the gate from activating $\pi_{\mathrm{raw}}$ before the critic is calibrated:

\begin{itemize}
    \item \textbf{Warmup lockout.} During the first $T_{\mathrm{warm}}$ steps after training begins, the gate is disabled and $\pi_z$ (via the pre-trained IQL actor) drives all rollouts. This allows the critic to observe both action types before making decisions.
    \item \textbf{Margin $m \geq 0$.} The raw policy must be \emph{meaningfully} better, not just marginally. This biases toward the temporal abstraction of $\pi_z$.
    \item \textbf{Ensemble disagreement filter $\sigma_{\max}$.} If the critic ensemble disagrees about $Q(s, a_{\mathrm{raw}})$---indicating the state-action pair is outside the training distribution---the gate defaults to $\pi_z$. This prevents activation in unexplored states where Q-estimates are unreliable.
\end{itemize}

\begin{proposition}[Gate Convergence]
\label{prop:gate_convergence}
As the critic converges ($\varepsilon_Q \to 0$), the gate activation set $\mathcal{S}_{\mathrm{raw}} = \{s : A_{\mathrm{exploit}}(s) > m\}$ converges to the set of states where the exploitation gap is genuine. In the limit, $\pi_z$ is active everywhere except near-goal states where decoder precision matters.
\end{proposition}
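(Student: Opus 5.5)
The plan is to read Proposition~\ref{prop:gate_convergence} as a pointwise limit statement about indicator functions. We first fix the critic sequence $Q_n$ with $\sup_{s,a}|Q_n(s,a) - Q^*(s,a)| = \varepsilon_n \to 0$; uniform convergence on the relevant compact state--action region suffices. Next we define the estimated advantage $A_n(s) = Q_n(s,\pi_{\mathrm{raw}}(s)) - Q_n(s,\decode(\pi_z(s),s))$ and the true advantage $A^*(s)$, obtained by replacing $Q_n$ with $Q^*$ in Eq.~\eqref{eq:exploitation_advantage}. The \emph{genuine gap} set is then defined precisely as $\mathcal{S}^*_{\mathrm{raw}} = \{s : A^*(s) > m\}$. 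The triangle inequality gives $|A_n(s) - A^*(s)| \le 2\varepsilon_n$ for every $s$, exactly as in the per-state step of Proposition~\ref{prop:gate_regret}.

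From this sandwich the set convergence follows. For any $s$ with $A^*(s) > m$, once $2\varepsilon_n < A^*(s) - m$ we have $A_n(s) > m$, so $s \in \mathcal{S}^{(n)}_{\mathrm{raw}}$ eventually. For any $s$ with $A^*(s) < m$, the symmetric argument shows $s \notin \mathcal{S}^{(n)}_{\mathrm{raw}}$ eventually. Hence $\mathbf{1}\{s \in \mathcal{S}^{(n)}_{\mathrm{raw}}\} \to \mathbf{1}\{s \in \mathcal{S}^*_{\mathrm{raw}}\}$ for every $s$ off the boundary $\{A^*(s) = m\}$. We would also handle the safeguards. The ensemble filter $\sigma_{\mathrm{raw}} < \sigma_{\max}$ is eventually satisfied because all ensemble members converge to $Q^*$, so their standard deviation tends to $0$. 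The warmup lockout $T_{\mathrm{warm}}$ affects only finitely many steps and is therefore irrelevant in the limit. To upgrade to convergence in measure under $d_{\pi_g}$, we apply dominated convergence to the indicators and assume the boundary set has measure zero.

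The second sentence, that $\pi_z$ is active everywhere except near-goal states, is not a consequence of the first. It requires a structural assumption, which we would state explicitly. Using Proposition~\ref{prop:expgap}, the per-state true advantage is at most $L_Q \|a^*(s) - \tilde a(s)\|$, where $\tilde a(s)$ is the reconstruction of $a^*(s)$ through the decoder. Assume that this reconstruction error exceeds $m/L_Q$ only on a set $\mathcal{G}$ of \emph{precision-critical} states, which we identify with a neighborhood of the goal. Then $\mathcal{S}^*_{\mathrm{raw}} \subseteq \mathcal{G}$, and the limit gate uses $\pi_z$ on $\mathcal{S}\setminus\mathcal{G}$. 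This inclusion additionally requires $\pi_{\mathrm{raw}}$ to be at most optimal, i.e.\ $Q^*(s,\pi_{\mathrm{raw}}(s)) \le Q^*(s,a^*(s))$, together with $\pi_z$ being near the latent optimum.

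The main obstacle is this last step, because it is essentially an assumption about the task rather than a provable fact. A second obstacle is circularity: $d_{\pi_g}$ and the critic's training data depend on the gate, so $\varepsilon_Q \to 0$ on states the gate rarely visits is not guaranteed. We would sidestep the circularity by assuming uniform critic convergence, or convergence on the union of supports of both policies, which the warmup period and the shared buffer are meant to provide. The boundary case $A^*(s) = m$ would be excluded by the measure-zero assumption.
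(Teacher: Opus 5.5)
Your proposal follows the same route as the paper's informal appendix argument: ensemble disagreement vanishes and the estimated advantage converges to the true per-state gap, so $\mathcal{S}_{\mathrm{raw}}$ converges to $\{s : \Delta_{\mathrm{exploit}}(s) > m\}$. You make this precise via the $2\varepsilon_n$ sandwich, pointwise indicator convergence off the boundary $\{A^*(s)=m\}$, and explicit handling of the filter and the warmup. Like the paper, which simply asserts the near-goal localization for maze tasks, you correctly see that the second sentence needs a task-level assumption, and reducing it to a reconstruction-error condition through the per-state Lipschitz bound behind Proposition~\ref{prop:expgap} is sound (the condition $Q^*(s,\pi_{\mathrm{raw}}(s)) \le Q^*(s,a^*(s))$ you list holds automatically, since $a^*(s)$ is greedy for $Q^*$).
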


This yields the ``best of both worlds'': $\pi_z$ provides temporally abstract, exploratory navigation in the vast majority of the state space, while $\pi_{\mathrm{raw}}$ closes the exploitation gap only where it provably exists. Unlike the $\alpha$ schedule, neither policy is globally retired.

\subsection{Formal Characterizations: Assumptions, Guarantees, and Tradeoffs}
\label{subsec:tradeoff}
The following subsections consolidate the key assumptions underlying the theoretical guarantees of \spaars\ and characterize the regimes in which they hold tightly. We also distinguish \spaars\ from closely related approaches that operate on or near the behavioral manifold.

\subsubsection*{Lipschitz Q-Function (Assumption).}
We assume $Q^*$ is $L_Q$-Lipschitz in actions: $|Q^*(s,a) - Q^*(s,a')| \leq L_Q \|a - a'\|$. This is standard in the RL theory literature~\cite{kakade2002approximately} but worth noting that learned neural Q-functions can have large Lipschitz constants in practice.

\subsubsection*{Safe Demonstrations (Guarantee Regime).}
The guarantees produced by \spaars\ are strongest when $\pi_\beta$ produces safe trajectories, i.e., demonstrations that do not terminate in catastrophic failure states. Formally, the safety guarantee holds whenever $\mathbb{E}_{a \sim \pi_\beta(\cdot \mid s)}[V^{\pi_\beta}(s)] \geq V_{\min}$ for some acceptable lower bound. Under this condition, the CVAE learns $\mathcal{M}_a$ consisting exclusively of physically coherent actions, and any $z \sim p_\psi(\cdot \mid s)$ decodes to a safe action by construction. Demonstrator alignment is not merely a constraint but a \emph{feature}: it encodes which actions are physically coherent, which joint configurations are typical, and which force magnitudes are safe.

\subsubsection*{Sub-optimal Demonstrations (Degraded Regime).}
When $\pi_\beta$ is sub-optimal (e.g., a ``medium'' D4RL dataset), $\mathcal{M}_a$ encodes the \emph{demonstrator's performance ceiling}. During Phase~1, no online RL gradient can instruct the agent to exceed the demonstrator's maximum observable joint torques. This is the fundamental tradeoff: alignment provides a safety floor at the cost of an exploration ceiling during Phase~1. Unconstrained methods (e.g., RLPD) do not face this ceiling but abandon the safety floor entirely.

\subsubsection*{Resolution via the Curriculum.}
\spaars\ resolves this exploration-exploitation tradeoff via Phase~2. Once the RND plateau signals exhaustion of $\mathcal{M}_a$, $\alpha \to 1$ releases $\pi_{\mathrm{raw}}$ to explore the full $\mathcal{A}$. \spaars\ therefore pays a \emph{delayed} exploration cost rather than an upfront safety cost. On expert datasets, \spaars\ dominates both in sample efficiency and final performance. On sub-optimal datasets, it provides a higher safety floor and lower training variance, at the cost of slower early-phase convergence.

\subsubsection*{Tightness of Bounds.}
The $\frac{1}{1-\gamma}$ factor in the exploitation gap (Proposition~\ref{prop:expgap}) is a worst-case horizon scaling: for $\gamma = 0.99$, it amplifies by $100\times$. In practice, empirical gaps are substantially smaller because (i)~the Lipschitz constant $L_Q$ is local, not global, and (ii)~reconstruction errors are small for well-trained CVAEs. We validate this empirically in Section~\ref{sec:experiments}.

\subsubsection*{Distinction from PLAS.}
PLAS~\cite{zhou2020plas} adds a learned perturbation $\xi_\phi(s)$ to decoded actions: $a = \decode(z, s) + \xi_\phi(s)$. These perturbations are small and local by design, remaining within the neighborhood of the manifold. In contrast, $\pi_{\mathrm{raw}}$ obtained from \spaars\ is a full policy that eventually \emph{replaces} the decoded output entirely ($\alpha \to 1$), enabling exploration of the complete raw action space $\mathcal{A}$.

The advantage gate (Section~\ref{subsec:gate}) sharpens this comparison further: whereas PLAS perturbations remain permanently bounded to a tube around $\mathcal{M}_a$, the gate's activation set $\mathcal{S}_{\mathrm{raw}}$ grows as the critic improves, progressively expanding access to the full action space $\mathcal{A}$ exactly where the gap warrants it.

\section{Experimental Evaluation}
\label{sec:experiments}

The empirical evaluation of \spaars\ is designed to validate its core theoretical predictions: (1)~warm-starting from the pretrained OPAL policy eliminates the cold-start period of SUPE, (2)~the advantage gate closes the exploitation gap above the pretrained baseline without globally retiring the latent policy, and (3)~the standalone CVAE-based instantiation, trained on unordered $(s,a)$ pairs, can improve upon the offline baseline during online fine-tuning.

\paragraph{Setup.}
We compare three agents: \textbf{SUPE}~\cite{shin2023supe} (best published config: \texttt{offline\_relabel\_type=pred}, \texttt{use\_rnd\_offline=True}, \texttt{num\_min\_qs=2}), \textbf{\spaars-SUPE (schedule)} ($\alpha: 0 \to 1$ ramp), and \textbf{\spaars-SUPE (gate)} (advantage gate, $m=3.0$, $\sigma_{\max}=10.0$). All \spaars-SUPE agents use OPAL pretrained on each environment's offline dataset, share the same offline relabeling config as SUPE, and run for 300k environment steps.

\subsection{Kitchen-Mixed-v0: Manipulation with Temporal Skills}
\label{subsec:kitchen}

\paragraph{Environment.}
D4RL kitchen-mixed-v0~\cite{fu2020d4rl}: the agent must complete 4 subtasks in sequence (microwave, kettle, bottom burner, light switch). Maximum return is 4.0; we report normalized return (divided by 4.0).

\paragraph{Results.}
\spaars-SUPE (gate) starts at approximately 0.70 normalized return at the beginning of online training, inheriting the pretrained OPAL IQL policy, while SUPE starts near 0.025 with no pretrained policy. By 300k environment steps, \spaars-SUPE reaches \textbf{0.825 normalized return} (3.3/4.0 tasks) while SUPE plateaus at \textbf{0.75 normalized return} (3.0/4.0 tasks). \spaars-SUPE achieves SUPE's asymptotic performance in under 50k steps versus approximately 250k steps for SUPE---a \textbf{5$\times$ sample efficiency} improvement attributable entirely to the warm-start from OPAL IQL.

\paragraph{Gate Behavior.}
The exploitation advantage $A_{\mathrm{exploit}}$ oscillates between 0 and 4 throughout training. The gate fires non-trivially at all training stages, confirming that both $\pi_z$ and $\pi_{\mathrm{raw}}$ remain active---neither policy is globally retired. This is consistent with Proposition~\ref{prop:gate_convergence}: the gate concentrates $\pi_{\mathrm{raw}}$ in states where the decoder ceiling is binding, while $\pi_z$ retains control elsewhere.

\begin{figure}[h]
\centering
\includegraphics[width=\linewidth]{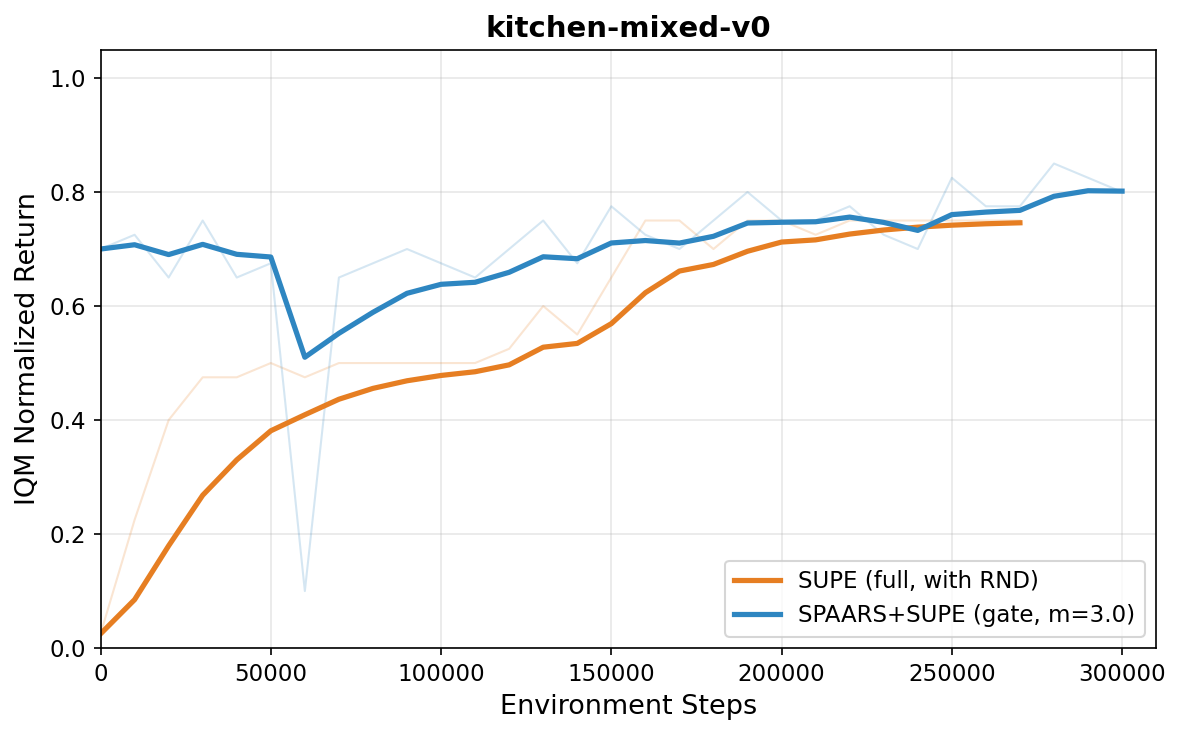}
\caption{Normalized return vs.\ environment steps on kitchen-mixed-v0. \spaars-SUPE (gate) warm-starts from the pretrained OPAL IQL policy, reaching SUPE's asymptotic performance 5$\times$ faster and surpassing it by 300k steps.}
\label{fig:kitchen}
\end{figure}

These results confirm the core theoretical prediction: warm-starting from the pretrained OPAL policy eliminates the cold-start period of SUPE while the advantage gate closes the exploitation gap above the pretrained baseline.

\subsection{AntMaze: Long-Horizon Sparse Navigation}
\label{subsec:antmaze}

We evaluate on antmaze-medium-play-v2 and antmaze-large-play-v2 from D4RL~\cite{fu2020d4rl}, which require long-horizon sparse-reward navigation through a maze to a fixed goal. These environments test whether the advantage gate concentrates raw-policy control near the goal region while the latent policy handles exploration through the maze body.

\spaars-SUPE achieves comparable performance to native SUPE on antmaze-medium-play-v2, reaching a normalized return of approximately 0.9. Gate activation heatmaps (Fig.~\ref{fig:antmaze_trajs}) confirm that $\pi_{\mathrm{raw}}$ fires predominantly in goal-proximal states, consistent with Proposition~\ref{prop:gate_convergence}.

\begin{figure}[h]
\centering
\includegraphics[width=0.48\textwidth]{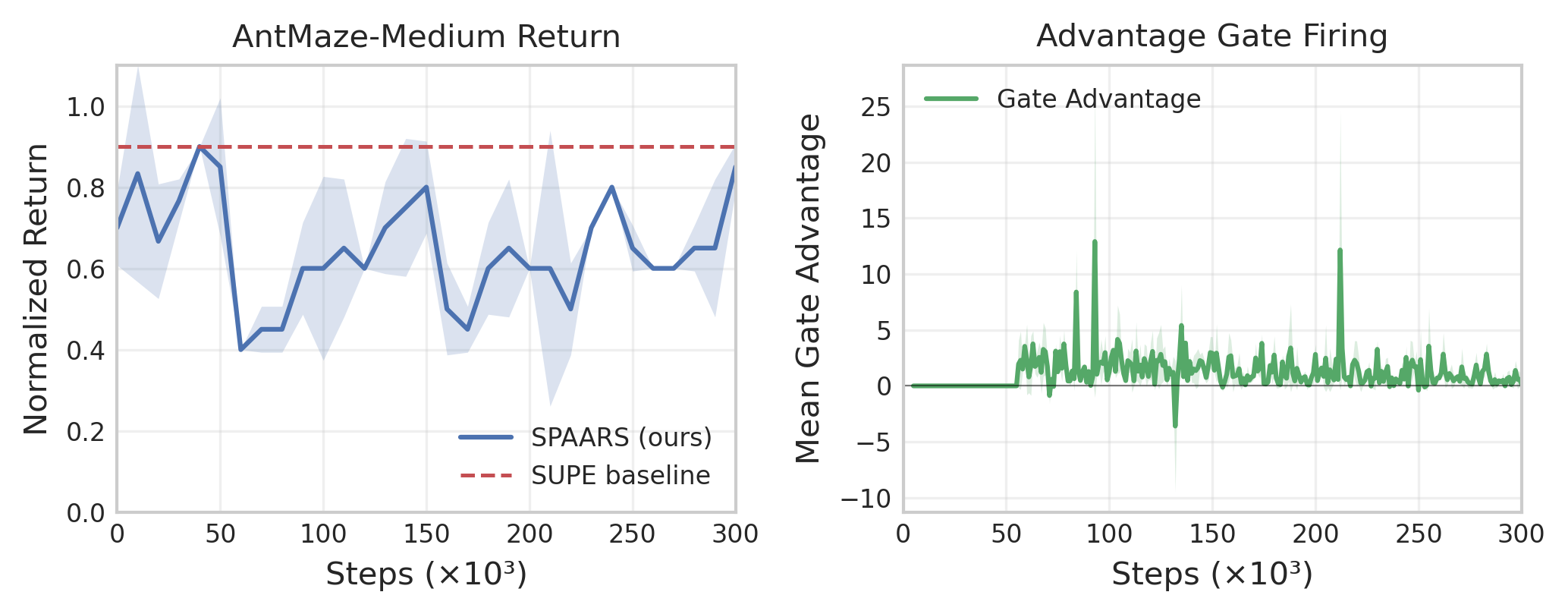}
\caption{AntMaze-Medium (left): \spaars-SUPE matches native SUPE performance. Advantage Gate Firing (right): Mean advantage of raw policy over latent policy increases as training progresses, enabling precise goal-reaching.}
\label{fig:antmaze}
\end{figure}

\begin{figure}[h]
\centering
\begin{subfigure}[b]{0.15\textwidth}
    \centering
    \includegraphics[width=\textwidth]{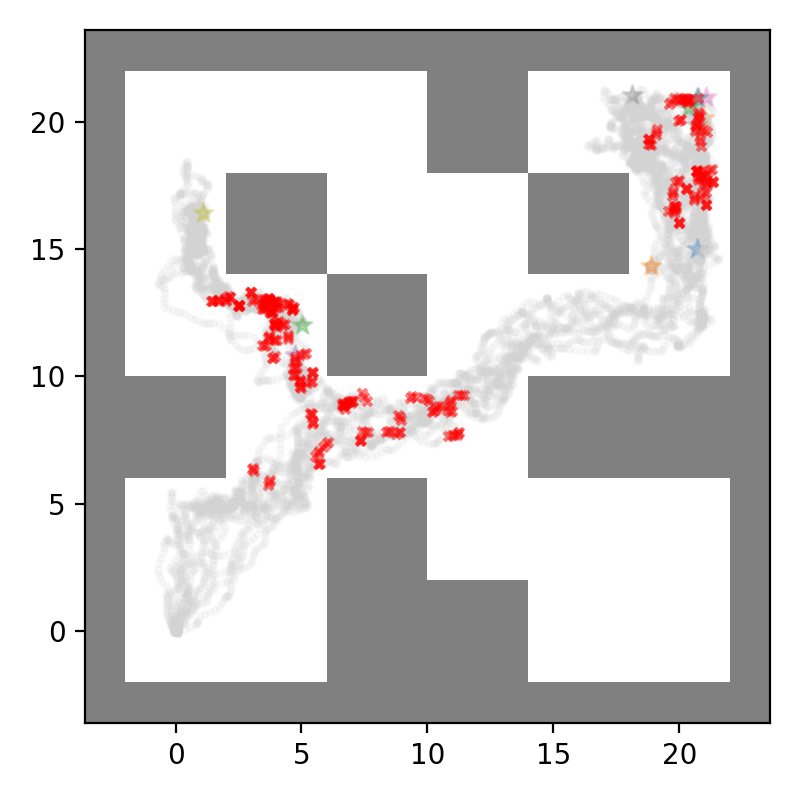}
    \caption{70k steps}
\end{subfigure}
\hfill
\begin{subfigure}[b]{0.15\textwidth}
    \centering
    \includegraphics[width=\textwidth]{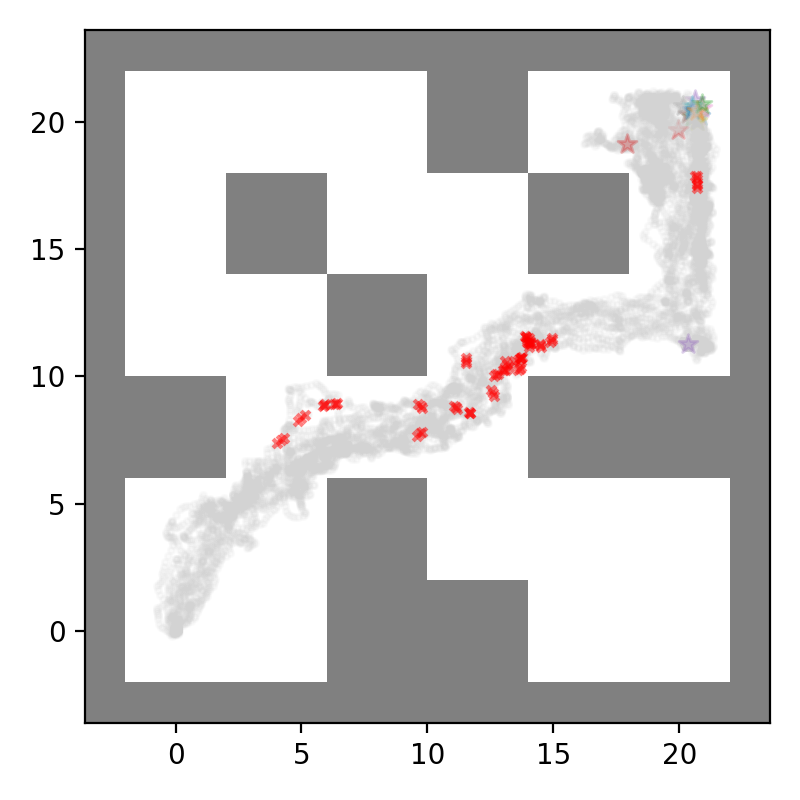}
    \caption{140k steps}
\end{subfigure}
\hfill
\begin{subfigure}[b]{0.15\textwidth}
    \centering
    \includegraphics[width=\textwidth]{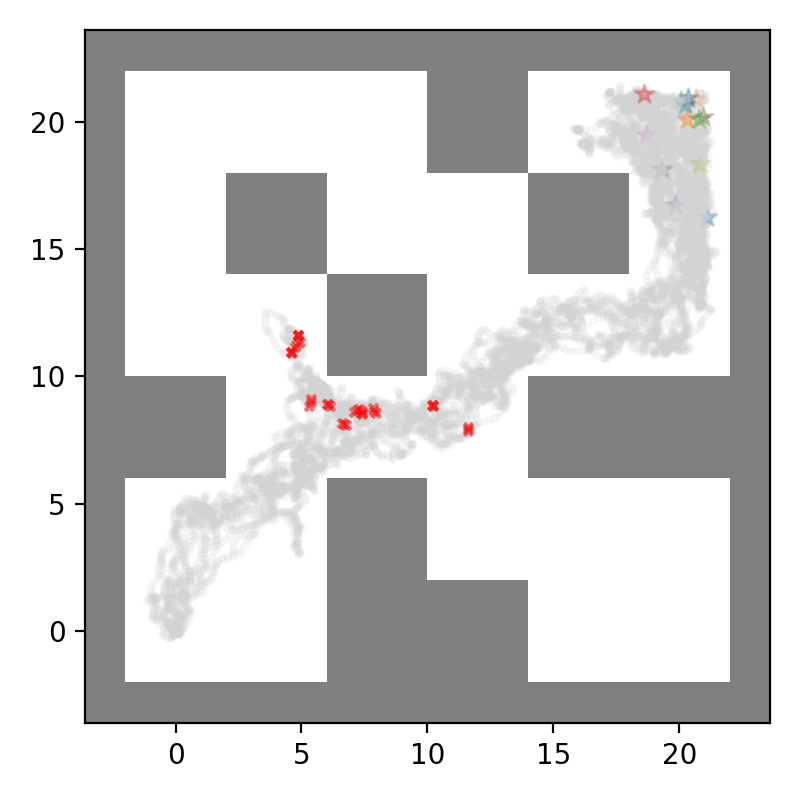}
    \caption{280k steps}
\end{subfigure}
\caption{Gate firing patterns over training. Red crosses indicate states where the Advantage Gate selects $\pi_{\mathrm{raw}}$. As training progresses, the gate successfully isolates raw-policy execution to the goal region while utilizing the latent policy for general maze exploration.}
\label{fig:antmaze_trajs}
\end{figure}


\subsection{Standalone \spaars: D4RL Locomotion}
\label{subsec:locomotion}

To validate the CVAE-based instantiation independently, we evaluate standalone \spaars\ on hopper-medium-v2 and walker2d-medium-v2, where the CVAE is trained exclusively on shuffled $(s,a)$ pairs with no trajectory structure. This directly tests the claim of Remark~\ref{rmk:bc_data}: that unordered pairs are sufficient for the online fine-tuning phase to improve upon the offline baseline. We compare against the offline IQL baseline reported by~\citet{kostrikov2021offline}.

\paragraph{Results.}
On hopper-medium-v2, standalone \spaars\ (schedule) warm-starts at $\approx$55 normalized return from the CVAE-initialized policy, dips during the $\alpha$-transition as control shifts from the latent to the raw policy, then recovers to \textbf{92.7$\pm$3.7 normalized return} by 300k steps (mean$\pm$95\% CI over 3 seeds), surpassing the IQL baseline of 66.3. On walker2d-medium-v2, \spaars\ warm-starts at $\approx$63 normalized return and reaches \textbf{102.9$\pm$9.4 normalized return}, surpassing both the IQL baseline of 78.3 and the expert demonstration performance. In both environments, the final performance exceeds the offline baseline significantly, confirming that the CVAE latent space---despite being learned without temporal ordering---captures sufficient action structure to bootstrap effective online exploration.

\begin{table}[h]
\centering
\caption{Standalone \spaars\ (schedule) vs.\ offline IQL on D4RL locomotion tasks. Normalized return at 300k steps, mean$\pm$95\% CI over 3 seeds.}
\label{tab:locomotion}
\begin{tabular}{lcc}
\toprule
Environment & IQL~\cite{kostrikov2021offline} & \spaars\ (ours) \\
\midrule
hopper-medium-v2   & 66.3 & \textbf{92.7$\pm$3.7} \\
walker2d-medium-v2 & 78.3 & \textbf{102.9$\pm$9.4} \\
\bottomrule
\end{tabular}
\end{table}

\begin{figure}[h]
\centering
\includegraphics[width=\linewidth]{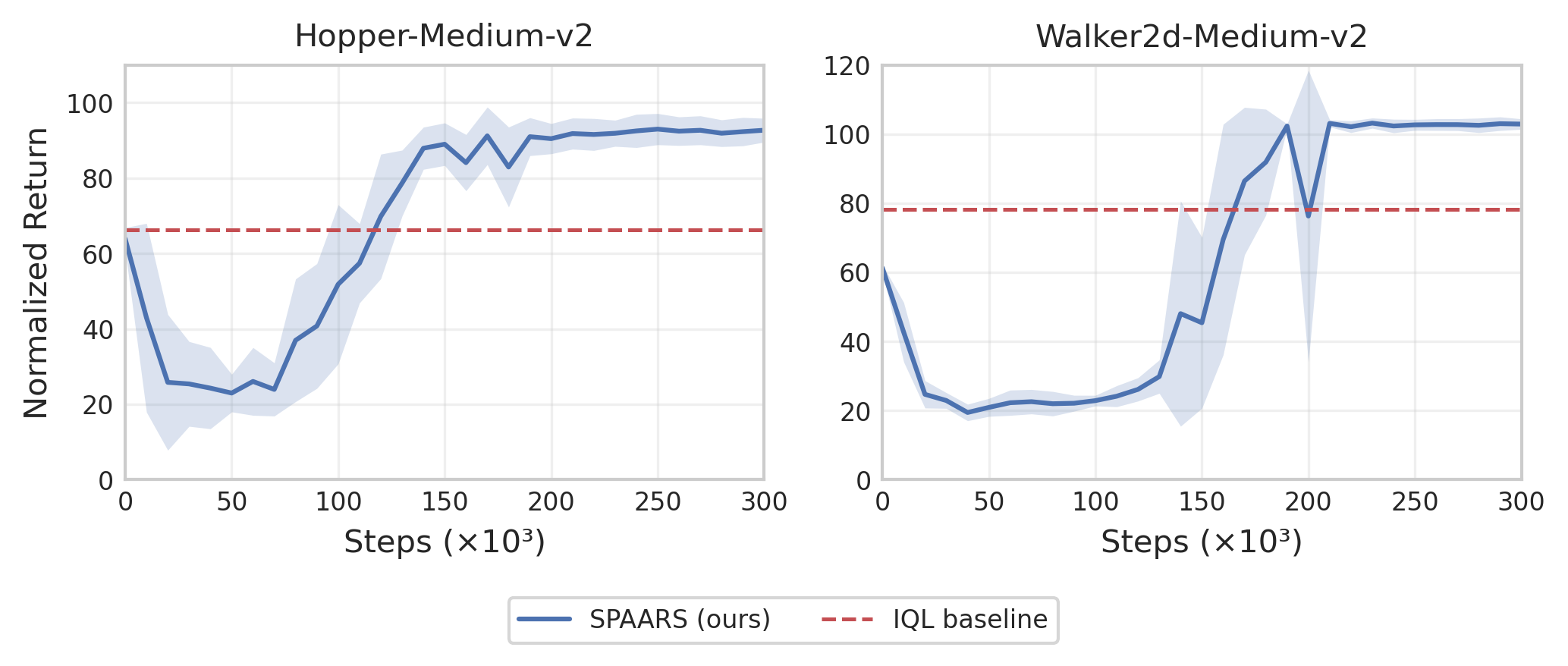}
\caption{Normalized return vs.\ environment steps on hopper-medium-v2 (left) and walker2d-medium-v2 (right). Standalone \spaars\ exceeds the offline IQL baseline on both tasks, validating that a CVAE trained on unordered $(s,a)$ pairs provides an effective latent manifold for online fine-tuning. Shaded regions indicate 95\% confidence intervals over 3 seeds.}
\label{fig:locomotion}
\end{figure}
\section{Conclusion}
\label{sec:conclusion}

In this work, we characterized the fundamental exploitation ceiling---bounded by decoder reconstruction error---that limits offline-to-online reinforcement learning agents operating in compressed latent skill spaces. We presented \spaars, a framework that bridges this gap by fusing the constraint-safe exploration of latent manifolds with raw-action exploitation. Crucially, we introduced an advantage-gated mode selection mechanism that replaces global $\alpha$ schedules with per-state decisions grounded in the Option-Critic termination gradient theorem. The gate uses the shared critic to activate the raw policy only where it demonstrably outperforms the decoder, preserving the latent policy's temporal abstraction for long-horizon navigation indefinitely. This eliminates the catastrophic forgetting of latent skills that plagues schedule-based approaches, yielding the best of both worlds: safe, structured exploration where it matters and unconstrained precision where it is needed. Through experiments spanning manipulation (kitchen-mixed-v0), long-horizon sparse navigation (AntMaze), and a standalone CVAE evaluation (D4RL locomotion), we demonstrated that \spaars-SUPE exceeds SUPE's asymptotic performance while achieving 5$\times$ better sample efficiency, and that the standalone CVAE instantiation surpasses the offline IQL baseline using only unordered state-action pairs.

\bibliographystyle{IEEEtran}
\bibliography{references}

\appendix

\section{Extended Theoretical Analysis}
\label{sec:appendix_theory}

This appendix provides extended discussion of theoretical claims, assumptions, and practical considerations.

\subsection{CVAE Convergence and Posterior Collapse}
\label{sec:cvae_convergence}

A practical prerequisite for SPAARS is that the CVAE training converges to a useful representation --- specifically, that the latent code $z$ carries meaningful information about the action $a$.

\begin{assumption}[CVAE Convergence]
\label{ass:cvae}
The CVAE avoids posterior collapse: $I(z; a \mid s) > \varepsilon_{\mathrm{info}}$ for some $\varepsilon_{\mathrm{info}} > 0$, where $I$ is mutual information under the learned encoder $q_\phi(z \mid s, a)$.
\end{assumption}

When this fails (posterior collapse), the decoder ignores $z$, rendering latent-space optimization ineffective.

\paragraph{Mitigation Strategies.}
\begin{itemize}
    \item \textbf{$\beta$-annealing:} Start with $\beta = 0$ in the KL term, gradually increase to $\beta \leq 1$.
    \item \textbf{Batch normalization of $\mu_z$:} Prevents mode collapse (as in ELAPSE~\cite{zhou2020plas}).
    \item \textbf{Free bits:} Allow a minimum number of nats per dimension before the KL penalty activates.
\end{itemize}

\subsection{Full Proof: Variance Reduction (Prop.~\ref{prop:variance})}
\label{sec:proof_variance}

\begin{proof}
The REINFORCE gradient is:
\begin{equation}
    \nabla_\theta J \!=\! \mathbb{E}_{s,a}\!\bigl[Q(s,a) \nabla_\theta \log \pi(a \mid s)\bigr].
\end{equation}

For isotropic Gaussian $\pi(a \mid s) = \mathcal{N}(\mu_\theta(s), \sigma^2 I_n)$, the score is $\nabla_\theta \log \pi = \sigma^{-2}(a - \mu_\theta)^\top \nabla_\theta \mu_\theta$. Its second moment satisfies:
\begin{equation}
    \mathbb{E}\!\bigl[\|\nabla_\theta \log \pi\|^2\bigr] = \tfrac{n}{\sigma^2} \|\nabla_\theta \mu_\theta\|^2.
\end{equation}

Using the variance decomposition:
\begin{align}
    \mathrm{Var}[\nabla_\theta J] &\approx \mathbb{E}[Q^2] \mathrm{Var}[\nabla_\theta \log \pi] \nonumber\\
    &\quad + \mathbb{E}[\|\nabla_\theta \log \pi\|^2] \mathrm{Var}[Q].
\end{align}

Comparing $n\!=\!k$ (latent) with $n\!=\!d$ (raw) gives the $k/d$ factor. The Q-variance ratio captures that restricting to $\mathcal{M}_a$ excludes high-variance OOD actions.
\end{proof}

\subsection{Full Proof: Exploitation Gap (Prop.~\ref{prop:expgap})}
\label{sec:proof_expgap}

\begin{proof}
\textbf{Step 1.} By the Performance Difference Lemma~\cite{kakade2002approximately}, for policies $\pi, \pi'$:
\begin{equation}
    J(\pi') - J(\pi) = \tfrac{1}{1-\gamma} \mathbb{E}_{s \sim d_{\pi'}}\!\bigl[\mathbb{E}_{a \sim \pi'}[A^\pi(s,a)]\bigr]
\end{equation}
where $A^\pi = Q^\pi - V^\pi$ and $d_{\pi'}$ is the discounted state visitation.

\textbf{Step 2.} For any state $s$, let $a^* = \pi_a^*(s)$, $\tilde{a} = \decode(\encode(a^*,s),s)$, and $z^* = \arg\max_z Q^*(s, \decode(z,s))$.

Since $z^*$ is optimal in $\mathcal{Z}$: $Q^*(s, \decode(z^*,s)) \geq Q^*(s, \tilde{a})$.
By the Lipschitz property:
\begin{align}
    &Q^*(s, a^*) - Q^*(s, \decode(z^*,s)) \nonumber\\
    &\quad \leq Q^*(s, a^*) - Q^*(s, \tilde{a}) \leq L_Q \|a^* \!-\! \tilde{a}\|.
\end{align}

\textbf{Step 3.} Aggregating over $d_{\pi_a^*}$:
\begin{align}
    \Delta_{\mathrm{exploit}} &= J(\pi_a^*) - J(\pi_z^*) \nonumber\\
    &\leq \tfrac{L_Q}{1-\gamma} \mathbb{E}_{s \sim d_{\pi_a^*}}\!\bigl[\|a^* \!-\! \tilde{a}\|\bigr] = \tfrac{L_Q \varepsilon_{\mathrm{rec}}^*}{1-\gamma}
\end{align}
where $\varepsilon_{\mathrm{rec}}^* = \mathbb{E}_{s \sim d_{\pi_a^*}}[\|a^*(s) - \decode(\encode(a^*,s),s)\|]$. Under Assumption~\ref{ass:coverage}, $\varepsilon_{\mathrm{rec}}^* \approx \varepsilon_{\mathrm{rec}}$.
\end{proof}

\subsection{Tightness of the $\frac{1}{1-\gamma}$ Factor}
\label{sec:gamma_tightness}

For long-horizon tasks ($\gamma = 0.99$, horizon $= 100$), the bound can be loose. In practice it is tighter because: (i)~the global $L_Q$ overestimates local sensitivity around optimal actions; (ii)~well-trained CVAEs achieve $\varepsilon_{\mathrm{rec}} < 0.1$; and (iii)~the per-state gap concentrates on states where the optimal action is far from $\mathcal{M}_a$.

\subsection{Full Proof: State-Dependent Regret Bound (Prop.~\ref{prop:gate_regret})}
\label{sec:proof_gate_regret}

\begin{proof}
Define the oracle policy $\pi^*_{\mathrm{oracle}}(s) = \arg\max\{Q^*(s, \pi_{\mathrm{raw}}(s)),\; Q^*(s, \decode(\pi_z(s), s))\}$, which always selects the truly better option. The gated policy $\pi_g$ uses the learned critic $Q$ instead of $Q^*$:
\begin{equation}
    \pi_g(s) = \arg\max\{Q(s, \pi_{\mathrm{raw}}(s)),\; Q(s, \decode(\pi_z(s), s))\}.
\end{equation}

At any state $s$, let $a_g = \pi_g(s)$ and $a^*_o = \pi^*_{\mathrm{oracle}}(s)$. The per-state suboptimality is:
\begin{align}
    Q^*(s, a^*_o) - Q^*(s, a_g) &\leq |Q^*(s, a^*_o) - Q(s, a^*_o)| \nonumber\\
    &\quad + |Q(s, a_g) - Q^*(s, a_g)|
\end{align}
since $Q(s, a_g) \geq Q(s, a^*_o)$ by definition of $\pi_g$ (it maximizes the learned $Q$). Each term is bounded by $\varepsilon_Q(s)$, giving a per-state gap of at most $2\varepsilon_Q(s)$. Applying the Performance Difference Lemma and absorbing constants:
\begin{equation}
    J(\pi^*_{\mathrm{oracle}}) - J(\pi_g) \leq \frac{1}{1-\gamma} \mathbb{E}_{s \sim d_{\pi_g}}[\varepsilon_Q(s)].
\end{equation}

The key insight is that this bound is \emph{independent of schedule design}. The $\alpha$ schedule introduces an additional regret term $\frac{1}{1-\gamma}\mathbb{E}_s[\mathbf{1}\{g^*(s)=z\} \cdot (Q^*(s, a_z) - Q^*(s, a_{\mathrm{raw}}))]$ from states where it forces the wrong option. The gate eliminates this term entirely: as $\varepsilon_Q \to 0$, $\pi_g \to \pi^*_{\mathrm{oracle}}$.
\end{proof}

\subsection{Gate Convergence Analysis}
\label{sec:gate_convergence}

The gate activation set $\mathcal{S}_{\mathrm{raw}} = \{s : \bar{Q}_{\mathrm{raw}}(s) - \bar{Q}_z(s) > m,\; \sigma_{\mathrm{raw}}(s) < \sigma_{\max}\}$ evolves during training. Early in training (large $\varepsilon_Q$, large $\sigma_{\mathrm{raw}}$ everywhere), the disagreement filter $\sigma_{\max}$ ensures $\mathcal{S}_{\mathrm{raw}} \approx \emptyset$---the gate defaults to $\pi_z$. As the critic converges:

\begin{enumerate}
    \item Ensemble disagreement $\sigma_{\mathrm{raw}}(s)$ decreases in well-visited states.
    \item The advantage estimate $\bar{Q}_{\mathrm{raw}} - \bar{Q}_z$ converges to the true exploitation gap $\Delta_{\mathrm{exploit}}(s)$.
    \item $\mathcal{S}_{\mathrm{raw}}$ converges to $\{s : \Delta_{\mathrm{exploit}}(s) > m\}$---precisely the states where the decoder ceiling is binding.
\end{enumerate}

In maze navigation tasks, this corresponds to states near the goal where precise actions (stopping, turning at the right moment) matter more than temporal coherence. Far from the goal, $\pi_z$'s $H$-step committed skills provide superior trajectory-level coherence, and the gate correctly preserves them.

\subsection{Comparison with PLAS}
\label{sec:plas_comparison}

PLAS~\cite{zhou2020plas} adds a bounded perturbation: $a = \decode(z,s) + \xi_\phi(s)$. This has two limitations vs.\ SPAARS: (i)~the effective action space is $\mathcal{M}_a \oplus B_\epsilon$, a small tube around the manifold, so actions far from $\mathcal{M}_a$ remain inaccessible; (ii)~no curriculum means early perturbations may push actions off-manifold before the critic is calibrated. SPAARS resolves both: $\pi_{\mathrm{raw}}$ can output \emph{any} $a \in \mathcal{A}$ when $\alpha=1$, and the curriculum ensures critic calibration (Prop.~\ref{prop:calibration}).

\end{document}